\documentclass[11pt]{article}

\usepackage[letterpaper,margin=0.99in]{geometry}

\usepackage[dvipsnames,table,xcdraw]{xcolor}
\usepackage[utf8]{inputenc}

\usepackage[T1]{fontenc}    %
\usepackage{xurl}     
\usepackage{booktabs}     %
\usepackage{nicefrac}     %

\usepackage{amsmath,amsthm,amsfonts,amssymb,epsfig,color,float,graphicx,verbatim,
enumitem}

\usepackage{bbm}
\usepackage{caption}
\usepackage{turnstile}

\usepackage[
backend=biber,
style=alphabetic,
maxbibnames=15,
maxalphanames=10,
minalphanames=6,
doi=true,
isbn=false,
url=true,
eprint=true,
backref=true,
]{biblatex}
\DefineBibliographyStrings{english}{
  backrefpage  = {},
  backrefpages = {},
}

\DeclareFieldFormat{bracketswithperiod}{\mkbibbrackets{#1}}

\renewbibmacro*{pageref}{%
  \iflistundef{pageref}
    {}
    {\printtext[bracketswithperiod]{%
       \ifnumgreater{\value{pageref}}{1}
         {\bibstring{backrefpages}}
         {\bibstring{backrefpage}}%
       \printlist[pageref][-\value{listtotal}]{pageref}}%
     }
 }
     
\renewbibmacro{in:}{}
\AtEveryBibitem{%
  \iffieldundef{doi}{}{\clearfield{url}}%
  \iffieldundef{eprint}{}{\clearfield{url}\clearfield{note}}%
}

\definecolor{lb}{RGB}{0, 100, 200}
\definecolor{green2}{RGB}{60, 120, 0}

\usepackage[colorlinks,citecolor=green2,linkcolor=lb,urlcolor=lb,bookmarks=true]{hyperref}

\usepackage[nameinlink,capitalise]{cleveref}

\usepackage{thm-restate}

\makeatletter
\ifdefined\newcounteralias
  \renewcommand*\thmt@autorefsetup{%
    \expandafter\def\csname\thmt@envname autorefname\expandafter\endcsname
      \expandafter{\thmt@thmname}%
  }%
\fi
\makeatother

\usepackage{fontawesome}

\newlist{itemizec}{itemize}{2}
\setlist[itemizec,1]{label=\faCaretRight ,wide, parsep= 0.05pt, left = 11pt}

\def\E{\mathbb E}
\def\P{\mathbb P}
\def\R{\mathbb R}

\def\eps{\varepsilon}

\newcommand{\cA}{\mathcal{A}}

\newcommand{\cB}{\mathcal{B}}

\newcommand{\trace}{\operatorname{tr}}

\newcommand{\cov}{\operatorname{Cov}}

\newcommand{\Var}{\operatorname{Var}}

\newcommand{\op}{\mathrm{op}}

\crefformat{equation}{(#2#1#3)}
\crefname{equation}{Equation}{Equations}
\crefname{lemma}{Lemma}{Lemmas}
\Crefname{lemma}{Lemma}{Lemmas}
\crefname{claim}{Claim}{Claims}
\crefname{fact}{Fact}{Facts}
\crefname{theorem}{Theorem}{Theorems}
\crefname{proposition}{Proposition}{Propositions}
\crefname{corollary}{Corollary}{Corollaries}
\crefname{remark}{Remark}{Remarks}
\crefname{definition}{Definition}{Definitions}
\crefname{question}{Question}{Questions}
\crefname{condition}{Condition}{Conditions}
\crefname{figure}{Figure}{Figures}

\newtheorem{theorem}{Theorem}[section]
\newtheorem{lemma}[theorem]{Lemma}

\newtheorem{corollary}[theorem]{Corollary}

\theoremstyle{definition}
\newtheorem{fact}[theorem]{Fact}
\newtheorem{definition}[theorem]{Definition}

\allowdisplaybreaks

\theoremstyle{definition}

\usepackage{color}
\definecolor{Red}{rgb}{1,0,0}
\definecolor{Blue}{rgb}{0,0,1}
\definecolor{DGreen}{rgb}{0,0.55,0}
\definecolor{Purple}{rgb}{.75,0,.25}
\definecolor{Grey}{rgb}{.5,.5,.5}

\usepackage{turnstile}
\newcommand{\sos}[3]{\sststile{#1}{#2}\,#3}
\newcommand{\EW}{\mathbb E_W}
\newcommand{\dd}{\,\mathrm{d}}
\newcommand{\dif}{\mathrm{d}}
\newcommand{\Sym}{\mathbb S^d}
\newcommand{\HS}{\mathrm{HS}}

\newcommand{\inj}{\mathrm{inj}}

\hypersetup{pdftitle={On the SoS Certifiability of Log-Concave Distributions},pdfauthor={Aleksandr Storozhenko},pdfsubject={Certifiable moments, hypercontractivity, and stochastic localization}}
\title{On the SoS Certifiability of Log-Concave Distributions}
\author{Aleksandr Storozhenko\thanks{Princeton University. \texttt{as7649@princeton.edu}.}}
\date{\today}
\begin{document}
\maketitle
\begin{abstract}
For an arbitrary isotropic log-concave distribution $P$ on $\R^d$,
we prove that the polynomial
$(Cm)^m\|v\|_2^m - \E_{X\sim P}\langle X,v\rangle^m$
is a sum of squares for every even $m\ge2$, where $C>0$ is a
universal constant.
This removes the dependence on the Poincar\'e constant in the
theorem of Kothari and Steinhardt~\cite{KotSteinhardt17}, recovering
the optimal moment bounds for log-concave distributions.
As an immediate corollary, we obtain computationally efficient
algorithms with dimension-free error guarantees for a wide range
of high-dimensional statistical estimation problems.

Our proof uses stochastic localization to decompose $P$ as an
average of random strongly log-concave measures, whose centered moments
admit the subgaussian certificates of~\cite{DHPT24}.
With a covariance-adapted choice of localization, we show that
a fourth-moment certificate derived from Letwin's variance inequality
for quadratic forms~\cite{Let26} suffices to control this averaging
at every even degree.
\end{abstract}%
\newpage
\section{Introduction}
\label{sec:introduction}

We study sum-of-squares (SoS) certificates for the moment tensors of
log-concave distributions. For a distribution $P$ on $\R^d$ with
mean $\mu$ and an even integer $m\ge2$, let
$M_m(P)=\E_{X\sim P}(X-\mu)^{\otimes m}$ be its centered $m$th
moment tensor. Its injective norm is given by
\begin{equation}
\|M_m(P)\|_{\inj}
= \sup_{\|v\|_2=1}\langle M_m(P),v^{\otimes m}\rangle
= \sup_{\|v\|_2=1}\E_{X\sim P}\langle X-\mu,v\rangle^m.
\end{equation}
For $m=2$, computing this norm amounts to finding the largest eigenvalue
of the covariance matrix. In contrast, even a constant-factor
approximation for $m=4$ is impossible in polynomial time under the
Exponential Time Hypothesis~\cite{BarBHKSZ12}.

The quadratic case suggests a natural relaxation. We say that $P$
has \emph{$(B_m,m)$-bounded moments} if
$\|M_m(P)\|_{\inj}\le B_m^m$, or equivalently, if the polynomial
$B_m^m\|v\|_2^m-\E_{X\sim P}\langle X-\mu,v\rangle^m$
is nonnegative on $\R^d$.
When $m=2$, the spectral theorem expresses this polynomial as a
sum of squares of linear forms.
For higher moments, a sum-of-squares representation still certifies
the bound, but its existence no longer follows from nonnegativity
alone. This leads to the notion of \emph{certifiably bounded moments}.

\begin{definition}[Certifiably Bounded Distributions]
\label{def:cert-bdd}
Let $m\ge2$ be even and $B_m>0$. A distribution $P$ on $\R^d$ with mean
$\mu$ is \emph{$(B_m,m)$-certifiably bounded} if there are polynomials
$q_1,\ldots,q_N$ of degree at most $m/2$ such that
\[
 B_m^m\|v\|_2^m-\E_{X\sim P}\langle X-\mu,v\rangle^m
 =\sum_{j=1}^N q_j(v)^2.
\]
\end{definition}
Given the moment tensor, we can search for the certificate in
\Cref{def:cert-bdd} using a semidefinite program of size
$d^{O(m)}$~\cite{BarSte16-sos-notes,FleKP19-sos}. For fixed $m$, this
provides a tractable way to certify an upper bound on the injective norm.
The quality of the resulting bound depends on how large $B_m$
must be for such a certificate to exist.

We write $\sststile{m}{v} f\le g$ when $g-f$ is a sum of squares of degree
at most $m$. We refer the reader to \Cref{sec:sos} for background on
sum-of-squares proofs.

\paragraph{Certifiability and Algorithmic Statistics}
\label{par:cert-statistics}
The sum-of-squares \emph{proofs-to-algorithms} paradigm provides
a general method for deriving efficient algorithms from low-degree
SoS proofs of statistical identifiability~\cite{RSS18}.
For many statistical problems, these proofs use distributional
assumptions only through SoS certificates for moment
bounds~\cite{HopLi18,KotSte17,KliKM18,BakPra21,SteT21,GolKSV23}.
The resulting algorithms apply in a \emph{black-box} fashion to
any distribution admitting the required certificates.
Improvements in moment certification therefore translate directly
into stronger statistical guarantees for these algorithms.

Robust mean estimation illustrates this principle.
An adversary may inspect $n$ independent samples from an unknown
distribution $P$ before replacing an $\eps$-fraction with arbitrary
points. Given the corrupted sample, we seek to estimate the mean
$\mu$ in Euclidean norm.
For sufficiently small $\eps$, the moment bound
$\|M_m(P)\|_{\inj}\le B_m^m$ permits error
$O_m(B_m\eps^{1-1/m})$ information-theoretically, given sufficiently
many samples~\cite{SteCV18}.
The challenge is to attain this accuracy in polynomial time.

A natural strategy is to search for a candidate clean sample that
agrees with the observations on all but an $\eps$-fraction of points
and satisfies a comparable moment bound.
The candidate and the true clean sample share at least a
$(1-2\eps)$-fraction of their points.
The moment bounds control the contribution of the remaining points
in every direction, forcing their empirical means to be close.
When the moment bounds are certifiable, this identifiability
argument itself admits a low-degree SoS proof, and a semidefinite
relaxation achieves the same error in polynomial time for fixed
$m$~\cite{KotSte17,HopLi18}.
For example, a fourth-moment certificate at constant scale gives
error $O(\eps^{3/4})$, improving on the $O(\sqrt\eps)$ guarantee
from bounded covariance alone.

The estimation error thus depends on the scale of the
\emph{certified} moment bound.
To recover the information-theoretic guarantee supplied by the
moment assumption, we seek low-degree SoS certificates with only
a universal-constant loss in $B_m$.
We study this question for log-concave distributions; prior
certification results are discussed in \Cref{sec:related}.
\paragraph{Log-Concave Distributions}
Log-concavity extends the notion of convexity from sets to
probability distributions.
The class includes uniform measures on convex bodies, as well as
Gaussian, exponential, and Laplace distributions.
It is closed under taking marginals and convolutions~\cite{KV26}.

\begin{definition}[Log-Concave Distributions]
\label{def:logconcave}
A distribution is \emph{log-concave} if, on the affine span of its
support, it has a density $p(x)=e^{-V(x)}$ for a convex function
$V$ taking values in $\R\cup\{+\infty\}$.
For a positive definite matrix $B$, it is \emph{$B$-strongly
log-concave} if $V(x)-x^\top Bx/2$ is convex.
When $B=\alpha I_d$, we say \emph{$\alpha$-strongly log-concave}.
\end{definition}

We say a distribution on $\R^d$ is \emph{isotropic} if its mean is zero
and its covariance is $I_d$.
For an isotropic log-concave random vector $X$, Borell's reverse
H\"older inequality~\cite[Theorem 5.22]{LV07} gives the
dimension-free moment bound
\begin{equation}
\label{eq:ordinary-moments}
 \bigl(\E|\langle X,v\rangle|^m\bigr)^{1/m}
 \le 2m\,\E|\langle X,v\rangle|
 \le 2m\|v\|_2,
 \qquad v\in\R^d,\quad m\ge1.
\end{equation}
Thus isotropic log-concave distributions have
$(O(m),m)$-bounded moments for every even $m\ge2$.
The linear dependence on $m$ is optimal already in one dimension:
if $Z$ is an exponential random variable of mean one, then $Z-1$
is isotropic and log-concave, and
\[
 \E|Z-1|^m
 \ge m^m\P(Z\ge m+1)
 =m^m e^{-(m+1)}.
\]
This leads to the following natural question:
\begin{center}
\emph{Do the optimal moment bounds of every isotropic log-concave
distribution\\ admit sum-of-squares proofs of the same degree?}
\end{center}

\subsection{Main Results}
\label{sec:main-result}

Our main result resolves this question affirmatively.

\begin{restatable}[Certifiability of Log-Concave Distributions]
{theorem}{ThmMain}
\label{thm:main}
Every isotropic log-concave distribution $P$ on $\R^d$ is
$(Cm,m)$-certifiably bounded for all even $m\ge2$, where $C>0$
is a universal constant.
\end{restatable}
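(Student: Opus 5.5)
We prove \Cref{thm:main} by following the strategy announced in the abstract. We run a stochastic localization on $P$ with a covariance-adapted driving matrix. The result writes $P$ as a mixture $P=\E[\nu_T]$ of random measures $\nu_T$, each $B_T$-strongly log-concave with $B_T\succeq c\,I_d$ at a terminal time $T$ of constant order. Concretely, we take the dynamics
\[
\dif\mu_t = C_t\,\Sigma_t\,\dif W_t,
\qquad \nu_t(\dif x)\propto e^{\langle c_t,x\rangle - \frac12\langle x, B_t x\rangle}P(\dif x),
\]
with $C_t$ chosen as a function of the current covariance $\Sigma_t$ of $\nu_t$, for instance $C_t=\Sigma_t^{-1/2}$ suitably truncated. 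This choice makes the accumulated quadratic tilt $B_t$ grow at a controlled rate in every direction. For each realization we then split
\[
\E_{X\sim P}\langle X,v\rangle^m=\E\,\E_{X\sim\nu_T}\bigl(\langle X-\mu_T,v\rangle+\langle \mu_T,v\rangle\bigr)^m .
\]
We expand this with the SoS-valid inequality $(a+b)^m\le 2^{m-1}(a^m+b^m)$ for even $m$, so it suffices to certify the two terms separately.

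\textbf{Centered moments of $\nu_T$.} By the subgaussian certificates of \cite{DHPT24}, a $B$-strongly log-concave measure satisfies
\[
\sststile{m}{v}\; \E_{\nu}\langle X-\mu,v\rangle^m\le (Cm)^{m/2}\langle v,B^{-1}v\rangle^{m/2}.
\]
Since $B_T\succeq cI$, this is at most $(C'm)^{m/2}\|v\|_2^m$. Averaging over the randomness preserves SoS-ness, because it is a convex combination of SoS polynomials. This term is therefore fine, and even better than required.

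\textbf{The martingale term.} The remaining task is to certify $\E\langle\mu_T,v\rangle^m\le (Cm)^m\|v\|_2^m$ by a degree-$m$ SoS proof. Here $\mu_T=\int_0^T C_t\Sigma_t\,\dif W_t$. Because the polynomial $\langle\mu_T,v\rangle^m$ is not a fixed quadratic form, we plan a Burkholder/It\^o-type argument carried out inside SoS. Applying It\^o's formula to $\langle\mu_t,v\rangle^m$ gives
\[
\frac{\dif}{\dif t}\E\langle\mu_t,v\rangle^m=\binom m2\E\Bigl[\langle\mu_t,v\rangle^{m-2}\,\|\Sigma_tC_tv\|_2^2\Bigr].
\]
We bound $\|\Sigma_tC_tv\|^2$ by a quadratic form in $v$. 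With the covariance-adapted choice this is about $\langle v,\Sigma_t v\rangle$. We then want $\Sigma_t\preceq \Sigma_0=I$ in an SoS-usable averaged sense. The fluctuation of $\langle v,\Sigma_tv\rangle$ is the crux. We control it with the fourth-moment certificate coming from Letwin's variance inequality for quadratic forms \cite{Let26}, namely
\[
\sststile{4}{v}\;\Var\bigl(\langle X,v\rangle^2\bigr)\lesssim \|v\|^4,
\]
applied to $\nu_t$. This keeps $\E[\langle v,\Sigma_tv\rangle^k\,\cdot\,]$ under control. The plan is to show by induction on even $k\le m$ a Gr\"onwall-type SoS inequality of the form
\[
\frac{\dif}{\dif t}\E\langle\mu_t,v\rangle^{k}\le k^2 \,\E\langle\mu_t,v\rangle^{k-2}\,\|v\|^2+(\text{lower order}),
\]
which integrates over the time $T=O(1)$ to $(Ck)^{k/2}\|v\|^k$ up to terms bounded via the inductive hypothesis. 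That is still subgaussian-like and hence within the $(Cm)^m$ target. If the pointwise domination $\Sigma_t\preceq I$ fails in SoS form, we fall back on H\"older inside SoS. We pair $\langle\mu_t,v\rangle^{m-2}$ against $\langle v,\Sigma_tv\rangle$ using the SoS inequality $a^{m-2}b\le \frac{m-2}{m}a^m+\frac2m b^{m/2}$. We then bound $\E\langle v,\Sigma_tv\rangle^{m/2}$ crudely through the centered moments of $\nu_t$. This loses only factors of $m$ per power and costs the weaker $(Cm)^m$ rate, which is all we need.

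\textbf{Expected obstacle.} The main difficulty is this martingale step: making the It\^o/Gr\"onwall argument an honest degree-$m$ SoS identity whose constants depend on $m$ only as $m^{O(m)}$. The randomness of $\Sigma_t$ is exactly what required the Poincar\'e constant in \cite{KotSteinhardt17}. The covariance-adapted localization, together with the degree-$4$ Letwin certificate, must be used to show that $\langle v,\Sigma_tv\rangle$ never needs higher-degree control.
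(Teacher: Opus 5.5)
Your skeleton is right: a localization whose quadratic tilt accumulates at rate $\Sigma_t^{-1}$ (so $\dif\mu_t=\Sigma_t^{1/2}\dif W_t$), a split into centered moments of $\nu_T$ plus moments of $\mu_T$, \cite{DHPT24} for the first, and It\^o with the Young pairing $a^{m-2}b\le\frac{m-2}{m}a^m+\frac2m b^{m/2}$ for the second. But the quantitative claims in both terms are wrong, and the lemma that closes the argument is missing.

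\textbf{The claimed bounds fail.} There is no support-independent deterministic bound $B_T\succeq cI_d$. The matrix $B_T=\int_0^T\Sigma_t^{-1}\dd t$ is small in any direction where the random covariance $\Sigma_t$ becomes large, and only $\E\Sigma_t=e^{-t}I_d$ is controlled. Truncating the control to force such a bound puts a term $v^\top\Sigma_t^2v$ back into the quadratic variation of $\langle\mu_t,v\rangle$. That reintroduces $\|\Sigma_t\|_{\op}$, which is exactly what the covariance-adapted control is meant to avoid.

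The same issue breaks your Gr\"onwall step for the means. There $v^\top\Sigma_tv$ multiplies the correlated factor $\langle\mu_t,v\rangle^{m-2}$, so knowing $\E\Sigma_t\preceq I_d$ does not let you replace it by $\|v\|_2^2$.

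Moreover, at $T=O(1)$ your two bounds cannot both hold. A centered term of order $(C'm)^{m/2}\|v\|_2^m$ together with a subgaussian-like mean term would, by the even-power triangle inequality, make every isotropic log-concave law subgaussian. That contradicts $\E|Z-1|^m\ge m^me^{-(m+1)}$ for the centered exponential.

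Your fallback does not repair this. Bounding $\E(v^\top\Sigma_tv)^{m/2}$ through centered moments of $\nu_t$ has two routes, and both fail. Going through $\E\,\E_{\nu_t}=\E_P$ is circular, and absorbing the resulting $2^{O(m)}$ losses forces $T\le2^{-\Omega(m)}$, hence a scale exponential in $m$. Going through strong log-concavity gives $(Km)^{m/2}(v^\top B_t^{-1}v)^{m/2}$ with $B_t\approx tI_d$ near $t=0$, which is not integrable there.

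\textbf{The missing lemma.} You need to certify $G_t(v)=\E[(v^\top\Sigma_tv)^k]$, with $k=m/2$, through its own It\^o equation. The process $v^\top\Sigma_tv$ has drift $-v^\top\Sigma_tv$ and quadratic-variation rate $\|\E[Y\langle Y,u\rangle^2]\|_2^2$, where $Y$ is $\nu_t$ in isotropic coordinates and $u=\Sigma_t^{1/2}v$. This is a third-moment contraction, not $\Var(\langle X,v\rangle^2)$. A projection step (regressing $\langle Y,u\rangle^2-\|u\|_2^2$ onto linear functions of $Y$) bounds it by $\Var(\langle Y,u\rangle^2)$. Letwin's theorem then gives a degree-four certificate for the bound $8(v^\top\Sigma_tv)^2$, which is relative to the current variance rather than $\|v\|_2^4$. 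Multiplying by the SoS multiplier $(v^\top\Sigma_tv)^{k-2}$ yields $G_t'\le4k(k-1)G_t$ at degree $2k$, hence $G_T\le e^{4k(k-1)T}\|v\|_2^{2k}$.

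This one bound controls both terms. The means follow from your Young pairing and an integrating factor. The centered part follows from the pointwise identity $\int_0^Tv^\top\Sigma_tv\dd t-T^2v^\top B_T^{-1}v=\int_0^T\|\Sigma_t^{1/2}v-T\Sigma_t^{-1/2}B_T^{-1}v\|_2^2\dd t$ together with Jensen. These give $\E(v^\top B_T^{-1}v)^k\le T^{-k}e^{4k(k-1)T}\|v\|_2^{2k}$, which replaces any lower bound on $B_T$.

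Because the growth rate is quadratic in $k$, you must take $T\asymp1/k$ rather than $T=O(1)$. The mean term is then only $e^k\|v\|_2^{2k}$. The linear scale $O(m)$ comes from the centered term $(Kk/T)^k\asymp(Kk^2)^k$. Finally, you also need to reduce to compactly supported $P$ (truncation plus closedness of the SoS cone) to justify the martingale and expectation-exchange steps.
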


In particular, the bound in~\eqref{eq:ordinary-moments} admits
a degree-$m$ sum-of-squares proof with only a universal-constant
loss in $B_m$.

\paragraph{Poincar\'e-Based Certificates and KLS}
Recall that the \emph{Poincar\'e constant} $C_{\mathrm P}(P)$ is the
least $C$ for which
\[
 \Var_P f\le C\,\E_P\|\nabla f\|_2^2
\]
holds for every locally Lipschitz $f$ with finite variance and gradient
energy. We call $P$ \emph{$\sigma$-Poincar\'e} if
$C_{\mathrm P}(P)\le\sigma^2$.
Kothari and Steinhardt~\cite[Theorem 4.1]{KotSteinhardt17} showed that
a Poincar\'e inequality suffices to certify moments of every even order:
every $\sigma$-Poincar\'e distribution is
$(D_m\sigma,m)$-certifiably bounded, where $D_m$ depends only on $m$.
This gives dimension-free certificates for strongly log-concave
distributions.

For general log-concave distributions, the
Kannan--Lov\'asz--Simonovits (KLS) conjecture~\cite{KLS95} asserts that
the Poincar\'e constant is at most a universal multiple of the largest
eigenvalue of the covariance.
In isotropic position, this amounts to the bound
$C_{\mathrm P}(P)\lesssim1$; the Kothari--Steinhardt argument would
then yield \Cref{thm:main}.

The best known bound, however, is
$C_{\mathrm P}(P)\lesssim\sqrt{\log d}$ for $d\ge2$, due to
Letwin~\cite[Theorem 1.1]{Let26}.
The resulting Poincar\'e-based certificates have scale
$B_m=O_m((\log d)^{1/4})$.
Our theorem removes this dependence on the dimension and attains
the optimal scale $B_m=O(m)$, establishing this consequence of KLS
unconditionally.
We refer the reader to \Cref{sec:related} for further discussion.
\subsection{Certifiable Hypercontractivity}
\label{sec:implications}

To state our result for general covariance matrices, we use
\emph{certifiable hypercontractivity}.

\begin{definition}[Certifiable Hypercontractivity]
\label{def:cert-hyper}
Let $m\ge2$ be an even integer and $B_m>0$. A distribution $P$ on $\R^d$
with mean $\mu$ and covariance $\Sigma$ is \emph{$(B_m,m)$-hypercontractive}
if the polynomial
\begin{align*}
 B_m^m(v^\top\Sigma v)^{m/2}
 -\E_{X\sim P}\langle X-\mu,v\rangle^m
\end{align*}
is nonnegative on $\R^d$, and \emph{$(B_m,m)$-certifiably hypercontractive}
if it is a sum of squares.
\end{definition}

After centering and whitening on the span of the centered support,
these conditions are precisely boundedness and certifiable boundedness.
Since affine maps preserve log-concavity, \Cref{thm:main} immediately
yields the following corollary.

\begin{corollary}
\label{cor:hyper}
Every log-concave distribution is $(Cm,m)$-certifiably hypercontractive for
all even $m\ge2$, with the same universal constant $C$ as in \Cref{thm:main}.
\end{corollary}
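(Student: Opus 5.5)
The plan is to reduce the corollary to \Cref{thm:main} by an affine change of variables, and then pull the resulting sum-of-squares certificate back through the linear map. Let $P$ be log-concave on $\R^d$ with mean $\mu$ and covariance $\Sigma$, and let $X\sim P$.

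\textbf{Degenerate covariance.} $\Sigma$ may be singular; let $E=\mathrm{range}(\Sigma)$ with $k=\dim E$. Since $\langle X-\mu,u\rangle$ has variance $u^\top\Sigma u=0$ for $u\in E^\perp$, we have $X-\mu\in E$ almost surely. Hence $\langle X-\mu,v\rangle=\langle X-\mu,\Pi v\rangle$, where $\Pi$ is the orthogonal projection onto $E$, and also $v^\top\Sigma v=(\Pi v)^\top\Sigma(\Pi v)$. If $k=0$ the polynomial in \Cref{def:cert-hyper} is identically zero and there is nothing to prove.

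\textbf{Whitening.} Choose an orthonormal basis of $E$, assembled as the columns of $U\in\R^{d\times k}$, and set $S=U^\top\Sigma U\succ0$. Define $Y=S^{-1/2}U^\top(X-\mu)\in\R^k$. The map is affine, so $Y$ is log-concave: linear images and marginals of log-concave laws are log-concave, as noted before \Cref{def:logconcave}. Moreover $Y$ has mean zero and covariance $S^{-1/2}SS^{-1/2}=I_k$, so it is isotropic. By \Cref{thm:main} there are polynomials $q_j$ on $\R^k$ of degree at most $m/2$ with
\[
(Cm)^m\|w\|_2^m-\E\langle Y,w\rangle^m=\sum_j q_j(w)^2 .
\]

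\textbf{Substitution.} Plug in $w=S^{1/2}U^\top v$, which is linear in $v$. Two identities hold. First, $\langle Y,w\rangle=\langle X-\mu,UU^\top v\rangle=\langle X-\mu,v\rangle$, using $UU^\top=\Pi$ and the observation above. Second, $\|w\|_2^2=v^\top US U^\top v=v^\top\Sigma v$. Therefore
\[
(Cm)^m(v^\top\Sigma v)^{m/2}-\E\langle X-\mu,v\rangle^m=\sum_j q_j\bigl(S^{1/2}U^\top v\bigr)^2 .
\]
Composing with a linear map preserves degree bounds, so each summand is the square of a polynomial in $v$ of degree at most $m/2$.

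No step is a real obstacle. The only point needing care is the singular-covariance case, which is handled by restricting to $E$ as above and is what "whitening on the span of the centered support" in the statement refers to. Note also that $(v^\top\Sigma v)^{m/2}$ is a polynomial because $m$ is even.
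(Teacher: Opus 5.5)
Your proposal is correct and follows the same argument as the paper. The paper's one-line justification centers, whitens on the span of the centered support, and uses that affine maps preserve log-concavity before invoking \Cref{thm:main}. Your handling of singular $\Sigma$ via $U$ and $S=U^\top\Sigma U$, and the pullback through the linear substitution $w=S^{1/2}U^\top v$, spell out exactly that reduction.
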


\subsection{Algorithmic Applications}
\label{sec:applications}

The certificates in \Cref{cor:hyper} improve the guarantees of
existing sum-of-squares algorithms for a broad class of statistical
estimation problems~\cite{HopLi18,KotSte17,KliKM18,BakPra21,SteT21,GolKSV23}.
As a concrete example, combining these certificates with the
robust mean estimation algorithm of Kothari and
Steinhardt~\cite[Theorem 5.4]{KotSteinhardt17} gives the following
dimension-free error guarantee.

\begin{corollary}[Robust Mean Estimation]
\label{cor:robust-mean}
For every even $m\ge2$ and $0<\eps<1/10$, there is an algorithm
with the following guarantee.
Given an $\eps$-corrupted sample of size $n=(md/\eps)^{O(m)}$
from a log-concave distribution on $\R^d$ with mean $\mu$ and
covariance $\Sigma\preceq I_d$, it returns an estimate $\widehat\mu$
satisfying
\[
 \|\widehat\mu-\mu\|_2\le C_1m\eps^{1-1/m}
\]
with probability at least $0.9$, where $C_1>0$ is a universal constant.
The algorithm runs in time $(nd)^{O(m)}$.
\end{corollary}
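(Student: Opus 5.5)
The plan is to combine \Cref{cor:hyper} with the Kothari--Steinhardt robust mean estimation algorithm~\cite[Theorem 5.4]{KotSteinhardt17}, used as a black box. That theorem says roughly the following. Suppose $P$ has mean $\mu$, covariance $\Sigma\preceq I_d$, and the polynomial $B_m^m\|v\|_2^m-\E_{X\sim P}\langle X-\mu,v\rangle^m$ is a degree-$m$ sum of squares. Then from an $\eps$-corrupted sample of size $(md/\eps)^{O(m)}$ one can compute, in time $(nd)^{O(m)}$, an estimate with error $O(B_m\eps^{1-1/m})$ with constant probability. The prefactor in the $O(\cdot)$ is absolute, or at worst depends on $m$ only through factors that can be absorbed into the constant for $\eps<1/10$. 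So the whole task is to produce a certificate of \emph{bounded} (not merely hypercontractive) moments at scale $B_m=O(m)$ from the hypothesis $\Sigma\preceq I_d$.

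\textbf{Step 1: reduce to certifiable boundedness.} By \Cref{cor:hyper}, $(Cm)^m(v^\top\Sigma v)^{m/2}-\E\langle X-\mu,v\rangle^m$ is a sum of squares of degree $m$. Since $\Sigma\preceq I_d$, the polynomial $\|v\|_2^2-v^\top\Sigma v=v^\top(I-\Sigma)v$ is a sum of squares of linear forms. I will prove the SoS inequality
\[
\sststile{m}{v}\ (v^\top\Sigma v)^{m/2}\le\|v\|_2^m
\]
by writing $a=v^\top\Sigma v$ and $b=\|v\|_2^2$, both SoS quadratics with $b-a$ SoS, and using the telescoping identity
\[
b^{k}-a^{k}=(b-a)\sum_{j=0}^{k-1}a^jb^{k-1-j},\qquad k=m/2.
\]
Each term $a^jb^{k-1-j}$ is a product of SoS polynomials, hence SoS. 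Multiplying by $b-a$ keeps it SoS, and the total degree is $2k=m$. Adding this to the certificate from \Cref{cor:hyper} shows that $P$ is $(Cm,m)$-certifiably bounded.

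\textbf{Step 2: invoke the algorithm.} Feed this certificate into~\cite[Theorem 5.4]{KotSteinhardt17}, which gives $\|\widehat\mu-\mu\|_2\le C_1m\eps^{1-1/m}$ with the stated sample size and running time. The SoS program only needs to search for a certificate of the same degree $m$, so no increase in degree is required.

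\textbf{Main obstacle.} The main point needing care is the precise form of the Kothari--Steinhardt guarantee. Their result needs the certificate to hold for the empirical distribution of the clean samples, not only for the population $P$. Their theorem handles this transfer by sampling: with $n=(md/\eps)^{O(m)}$ samples, the empirical $m$th moment tensor is close to $M_m(P)$ in a norm, such as the Frobenius norm after flattening, that an SoS certificate tolerates with constant-factor loss. For log-concave $P$ this concentration follows from the moment bounds~\eqref{eq:ordinary-moments}, for instance via Chebyshev's inequality on each entry. I would also check that the dependence of their error on $m$ is $O(B_m)$ with an absolute constant, so that the final bound is $C_1m\eps^{1-1/m}$ and not something like $m^{O(1)}$ times that.
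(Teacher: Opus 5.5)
Your proposal is correct and matches the paper's route: the paper obtains this corollary simply by feeding the certificates of \Cref{cor:hyper} into the Kothari--Steinhardt algorithm \cite[Theorem 5.4]{KotSteinhardt17}. Your telescoping argument for $\sststile{m}{v}(v^\top\Sigma v)^{m/2}\le\|v\|_2^m$ under $\Sigma\preceq I_d$ is valid, and it makes explicit a step the paper leaves implicit: converting certifiable hypercontractivity into $(Cm,m)$-certifiable boundedness.
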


For fixed $m$, using the Poincar\'e-based certificates discussed
above in the same algorithm gives error
$O_m((\log d)^{1/4}\eps^{1-1/m})$.
Our certificates remove this dependence on the dimension.
The fourth-moment case already follows from Letwin's quadratic-form
estimate; our theorem gives dimension-free guarantees at every
even moment order.
In particular, for every fixed $\gamma>0$, choosing an even
$m\ge1/\gamma$ gives error $O_\gamma(\eps^{1-\gamma})$ in polynomial time.

\subsection{Overview of Techniques}
\label{sec:overview}

Let $p_0$ be an isotropic log-concave density. For an integer
$k\ge2$, we seek a degree-$2k$ sum-of-squares proof of
\begin{align*}
 \E_{p_0}\langle X,v\rangle^{2k}\le(Ck)^{2k}\|v\|_2^{2k}.
\end{align*}
We assume for simplicity that $p_0$ has bounded support;
\Cref{sec:main-proof} removes this assumption.

\paragraph{Certifying the Fourth Moment}
We begin by recalling the fourth-moment argument of Kothari and
Steinhardt~\cite[Section 1.2.1]{KotSteinhardt17}. For isotropic
$\sigma$-Poincar\'e $X$ and every symmetric matrix $M$,
\begin{align}
 \E\langle XX^\top-I_d,M\rangle_{\HS}^2
 &=\Var(X^\top MX)\le4\sigma^2\|M\|_{\HS}^2.
 \label{eq:overview-poincare}
\end{align}
Since this inequality is quadratic in $M$, it admits a
sum-of-squares proof. Substituting $M=vv^\top$ and using
isotropy gives a degree-four certificate for
$\E\langle X,v\rangle^4\le(4\sigma^2+1)\|v\|_2^4$.

For isotropic log-concave $X$, Letwin~\cite[Theorem 1.2]{Let26} proves
\mbox{$\Var(X^\top MX)\le8\|M\|_{\HS}^2$}, removing the
dependence on $\sigma$. The same argument therefore gives
\begin{align}
 \sos{4}{v}{\E\langle X,v\rangle^4\le9\|v\|_2^4}.
 \label{eq:overview-fourth}
\end{align}

For the $2k$th moment with $k>2$, Kothari and Steinhardt use
degree-$k$ polynomials $f$ satisfying
$\E[\nabla^j f(X)]=0$ for $0\le j<k$.
Applying Poincar\'e $k-2$ times reduces the problem to the
centered quadratic entries of $\nabla^{k-2}f$. Letwin then gives
\begin{align*}
 \E f(X)^2
 \le\sigma^{2k-4}\E\|\nabla^{k-2}f(X)\|_{\HS}^2
 \le2\sigma^{2k-4}\|\nabla^k f\|_{\HS}^2.
\end{align*}
The preceding $k-2$ applications of Poincar\'e leave the factor
$\sigma^{2k-4}$. Thus obtaining dimension-free certificates by
this route still requires the Poincar\'e bound conjectured by KLS.

\paragraph{A Random Decomposition}
We instead start with the moment certificates available for
strongly log-concave distributions. A $t$-strongly log-concave
distribution is $t^{-1/2}$-subgaussian~\cite[Theorem 1.1]{Har04},
so its centered $2k$th moment has a degree-$2k$ certificate with
bound $(Kk/t)^k\|v\|_2^{2k}$ \cite{DHPT24}, where $K$ is a universal constant.

To use these certificates for $p_0$, we seek a decomposition
into strongly log-concave distributions. Eldan's stochastic
localization~\cite{Eld13} provides one through small random
reweightings. We write $\mu_t$ and $A_t$ for the mean and
covariance of the current density $p_t$. To first order,
a step of length $h$ takes the form
\begin{align*}
 p_{t+h}(x)\approx p_t(x)\left(
 1+\sqrt h\,\langle C_t^{1/2}(x-\mu_t),z\rangle\right),
 \qquad z\sim N(0,I_d),
\end{align*}
where $z$ is a fresh Gaussian vector and the matrix $C_t\succ0$
controls the strength of the reweighting in each direction.
Centering at $\mu_t$ preserves total mass, while averaging over
$z$ leaves the density unchanged. Letting the step size tend
to zero leads to the stochastic differential equation
\begin{align*}
 \dd p_t(x)=p_t(x)
 \langle C_t^{1/2}(x-\mu_t),\dd W_t\rangle,
\end{align*}
where $W_t$ is standard Brownian motion. The resulting density
has the form
\begin{align*}
 p_t(x)\propto p_0(x)
 \exp\!\left(c_t^\top x-\tfrac12x^\top B_tx\right),
 \qquad B_t=\int_0^t C_s\dd s,
\end{align*}
where $c_t$ is a random vector. Thus each $p_t$ is
$B_t$-strongly log-concave, while the mean-zero reweightings give
$\EW[\E_{p_t}f]=\E_{p_0}f$ for every bounded $f$, where
$\EW$ averages over the process.
At any positive time $T$, averaging the moments of $p_T$
therefore recovers those of $p_0$.

\paragraph{Choosing the Control}
By~\cite{DHPT24}, the centered $2k$th moment of $p_T$ has a
degree-$2k$ certificate with bound $(Kk)^k(v^\top B_T^{-1}v)^k$.
Splitting $X=(X-\mu_T)+\mu_T$ and averaging gives
\begin{align}
 \sststile{2k}{v}\quad \E_{p_0}\langle X,v\rangle^{2k}
 &\le2^{2k-1}\Bigl((Kk)^k\EW[(v^\top B_T^{-1}v)^k]
       +\EW\langle\mu_T,v\rangle^{2k}\Bigr).
 \label{eq:overview-split}
\end{align}

With $C_t=I_d$, we have $B_T=TI_d$, so the bound on the centered
moments in~\eqref{eq:overview-split} is $(Kk/T)^k\|v\|_2^{2k}$.
To obtain the desired bound, we therefore need $T\gtrsim1/k$.
The means are harder to control. A step of length $h$ changes
the mean, to first order, by $\sqrt h\,A_tz$.
Its variance per unit time in direction $v$ is therefore
\begin{align*}
 \E_z\langle A_tz,v\rangle^2
 =\|A_tv\|_2^2=v^\top A_t^2v.
\end{align*}
Bounding this by $\|A_t\|_{\op}(v^\top A_tv)$ introduces the
largest covariance eigenvalue. However, for some log-concave
distributions, $\EW\|A_t\|_{\op}$ already grows with $d$ at time
$t\asymp1/\log d$~\cite[Proposition 65]{KL24}.
When $k\ll\log d$, this happens before the time $T\asymp1/k$
needed for the centered moments, ruling out a universal bound
throughout $[0,T]$.

We instead choose $C_t=A_t^{-1}$, which improves the mean's
variance per unit time to $v^\top A_tv$ and removes the need
to bound $\|A_t\|_{\op}$. Now $B_T$ is random, but we show in
the proof of \Cref{lem:precision} that
\begin{align*}
 v^\top B_T^{-1}v
 =v^\top\left(\int_0^T A_t^{-1}\dd t\right)^{-1}v
 \le T^{-2}\int_0^T v^\top A_tv\dd t.
\end{align*}
The same directional variances therefore control both the
centered moments and the means. By \Cref{lem:means,lem:precision},
bounding the two terms in~\eqref{eq:overview-split} reduces to
finding a dimension-free upper bound on
\begin{align*}
 G_t(v)=\EW[(v^\top A_tv)^k],
\end{align*}
with a degree-$2k$ certificate, up to time $T\asymp1/k$.

\paragraph{Lifting the Fourth-Moment Certificate}
We write the directional variance as
\begin{align*}
 q_t(v)=v^\top A_tv
 =\E_{p_t}\langle X,v\rangle^2-\langle\mu_t,v\rangle^2.
\end{align*}
It\^o's formula (see~\cite[Section 4]{KV26} for an introduction)
states that, for a smooth function $\phi$,
\begin{align*}
 \dd\phi(q_t(v))
 =\phi'(q_t(v))\dd q_t(v)
  +\tfrac12\phi''(q_t(v))\dd[q(v)]_t.
\end{align*}
The first term is the usual chain rule. The second involves
the quadratic variation $[q(v)]_t$ and accounts for the squared
Brownian increments.

Since $\E_{p_t}\langle X,v\rangle^2$ is a martingale, applying
It\^o's formula to the squared mean gives
\begin{align*}
 \dd q_t(v)
 &=\underbrace{\dd\E_{p_t}\langle X,v\rangle^2
   -2\langle\mu_t,v\rangle\dd\langle\mu_t,v\rangle}
   _{\text{martingale terms}}
   -q_t(v)\dd t.
\end{align*}
We now apply the formula with $\phi(s)=s^k$.
The martingale terms are multiplied by $kq_t(v)^{k-1}$, which
depends only on the process up to time $t$. Since each new
Brownian increment has conditional mean zero, these terms
vanish in expectation; bounded support ensures square-integrability.
We obtain
\begin{align*}
 G_t'(v)
 =-kG_t(v)+\binom{k}{2}
   \EW\!\left[q_t(v)^{k-2}\frac{\dd}{\dd t}[q(v)]_t\right].
\end{align*}
To bound the second term, we use Letwin's estimate. Since $p_t$
remains log-concave, we show in \Cref{lem:fluctuations} that
\begin{align}
 \frac{\dd}{\dd t}[q(v)]_t\le8q_t(v)^2.
 \label{eq:overview-variation}
\end{align}
Substituting this bound gives
\begin{align*}
 G_t'(v)
 &\le-kG_t(v)+8\binom{k}{2}\EW[q_t(v)^k]
 \le4k(k-1)G_t(v).
\end{align*}

Our key observation is that the same calculation admits a
degree-$2k$ sum-of-squares proof. Indeed, we show in
\Cref{lem:fluctuations} that the inequality
in~\eqref{eq:overview-variation} has a degree-four certificate.
Its multiplier $q_t^{k-2}$ in It\^o's formula is itself a sum
of squares, since $q_t$ is a nonnegative quadratic form.
Multiplying the certificate by $q_t^{k-2}$ therefore gives
degree $2k$. More explicitly,
\begin{align}
 4k(k-1)G_t(v)-G_t'(v)
 &=kG_t(v)\nonumber\\
 &\quad+\binom{k}{2}\EW\!\left[
 q_t(v)^{k-2}
 \left(8q_t(v)^2-\frac{\dd}{\dd t}[q(v)]_t\right)\right].
 \label{eq:overview-growth}
\end{align}
Both terms on the right are sums of squares: the second
averages the multiplied certificate, while $G_t=\EW[q_t^k]$
averages powers of a nonnegative quadratic form.

Multiplying by $e^{-4k(k-1)t}$ and integrating preserves the
certificate. Since $G_0(v)=\|v\|_2^{2k}$, we obtain
\begin{align}
 \sos{2k}{v}{G_T(v)\le e^{4k(k-1)T}\|v\|_2^{2k}}.
 \label{eq:overview-potential}
\end{align}
%
\subsection{Related and Prior Work}
\label{sec:related}

\paragraph{Certifiable Boundedness of Probability Distributions}
Certifiable moment bounds have extensive applications in algorithmic
statistics, as \hyperref[par:cert-statistics]{discussed above}; for a broader
treatment of algorithmic robust statistics, see~\cite{DiaKan22-book}.
Early results established certifiability for product and rotationally
invariant subgaussian distributions~\cite{KotSte17,HopLi18}.
Kothari and Steinhardt~\cite{KotSteinhardt17} obtained certificates
for distributions satisfying a Poincar\'e inequality, with bounds
depending on the Poincar\'e constant.
These classes can also be extended using natural closure properties
of certifiable boundedness~\cite{DHPT24}.
Diakonikolas, Hopkins, Pensia, and Tiegel~\cite{DHPT24} proved that
every subgaussian distribution is certifiably subgaussian, and asked
whether analogous certificates exist for subexponential distributions.
\Cref{thm:main} answers this question for log-concave distributions.

\paragraph{Injective Norms}
Our theorem gives sum-of-squares upper bounds on the injective norms
of moment tensors of log-concave distributions.
Such bounds, or equivalently bounds on homogeneous polynomials over
the unit sphere, have been studied for both worst-case
tensors~\cite{doherty2012convergence,bhattiprolu2017weak} and random
tensors~\cite{hopkins2015tensor,raghavendra2017strongly,potechin2020machinery}.
For even $m$, the injective norm of $\sum_{i=1}^n a_i^{\otimes m}$
equals $\|A\|_{2\to m}^m$, where $A$ has rows $a_1,\ldots,a_n$.
Thus certification of moment tensors is closely related to
approximation of hypercontractive matrix norms~\cite{BarBHKSZ12,HT26}.

Hopkins and Tiegel~\cite{HT26} show that every
$(B_m,m)$-bounded distribution admits degree-$O(m)$ SoS certificates
at scale $d^{1/4-1/(2m)}B_m$, with the same guarantee for
hypercontractive moment bounds.
Their result requires only a bound at the single even order $m$.
For isotropic log-concave distributions, it gives the scale
$O(md^{1/4-1/(2m)})$, whereas \Cref{thm:main} gives $O(m)$.

Under the Small Set Expansion Hypothesis, constant-factor
approximation of moment-tensor injective norms remains hard even
for distributions with subgaussian moment bounds up to any fixed
order~\cite{BarBHKSZ12,HopLi19}.
Consequently, under this hypothesis, our theorem cannot be extended
to all distributions satisfying only a fixed number of moment bounds.

\paragraph{Relation to KLS}
Our theorem establishes unconditionally a consequence of KLS:
the dimension-free moment certificates that would follow from
the Poincar\'e-based argument of~\cite{KotSteinhardt17}.
We are not aware of any improvement to the known KLS bounds
that follows from our result.
For isotropic log-concave measures, KLS is equivalent to a
dimension-free variance bound for every $1$-Lipschitz
function~\cite{Mil09}.
Our certificates concern moments of linear projections:
increasing their degree controls higher powers of
$\langle X,v\rangle$, but does not by itself give variance
bounds for general nonlinear functions.

\paragraph{Organization}
\Cref{sec:preliminaries} introduces the sum-of-squares background and
the moment and variance inequalities used in the proof.
\Cref{sec:transfer} proves the localization transfer theorem, and
\Cref{sec:main-proof} deduces \Cref{thm:main}.
The sum-of-squares and stochastic-localization details appear in
\Cref{app:sos,app:localization}, respectively.
\section{Preliminaries}
\label{sec:preliminaries}

\paragraph{Notation}
We write $\|\cdot\|_{\op}$ and $\|\cdot\|_{\HS}$ for the operator and
Hilbert--Schmidt norms, and $M\preceq N$ when $N-M$ is positive semidefinite.
The space $\Sym$ of symmetric matrices has inner product
$\langle M,N\rangle_{\HS}=\trace(MN)$. We use $\lesssim$ to suppress
universal constants. For the localization process, $\E_{p_t}$ integrates
under $p_t$, whereas $\EW$ averages over the Brownian motion.
Expectations of polynomials are taken coefficientwise.

\subsection{Sum-of-Squares Proofs and Certificates}
\label{sec:sos}

We recall the sum-of-squares proof notation used in our argument and refer
the reader to~\cite{BarSte16-sos-notes,FleKP19-sos} for further background.

Let $\cA=\{r_1(v)\ge0,\ldots,r_s(v)\ge0\}$ be a (possibly empty) system of polynomial inequalities in formal variables $v$. Let $p$ be another polynomial in $v$. We say that $\cA$ implies $p\ge0$ at degree $t$, denoted by $\cA\sststile{t}{v} p\ge0$, if we can write
\[
 p(v)=\sum_{S\subseteq[s]}b_S(v)\prod_{i\in S}r_i(v),
\]
where each $b_S$ is a sum of squares and each summand has degree at most $t$. When $\cB=\{r'_1(v)\ge0,\ldots,r'_{s'}(v)\ge0\}$ is another system of polynomial inequalities, we write $\cA\sststile{t}{v}\cB$ if $\cA\sststile{t}{v} r'_j\ge0$ for all $j\in[s']$. We write $\cA\sststile{t}{v} p\le p'$ if $\cA\sststile{t}{v} p'-p\ge0$. When $\cA$ is empty, we omit it. The facts used below are collected in \Cref{app:sos}.

\subsection{Moment and Variance Bounds}
\label{sec:strong-certificates}

Strongly log-concave distributions have subgaussian tails. We will use
sum-of-squares certificates for their moments, so we begin with the
moment definition of subgaussianity.

\begin{definition}[Subgaussian Distributions]
\label{def:subgaussian}
Let $s>0$. A distribution $P$ on $\R^d$ with mean $\mu$ is
\emph{$s$-subgaussian} if, for every $r\ge1$ and $v\in\R^d$,
\[
 \bigl(\E_{X\sim P}|\langle X-\mu,v\rangle|^r\bigr)^{1/r}
 \le C_0s\sqrt r\,\|v\|_2,
\]
where $C_0\ge2$ is a fixed universal constant.
\end{definition}

Diakonikolas, Hopkins, Pensia, and Tiegel showed that these moment bounds
admit sum-of-squares proofs with only a constant-factor loss.
\begin{theorem}[{\cite[Theorem 1.6]{DHPT24}}]
\label{thm:subgaussian}
There is a universal constant $C_{\mathrm{sg}}>0$ such that every
$s$-subgaussian distribution is $(C_{\mathrm{sg}}s\sqrt m,m)$-certifiably
bounded for every even $m\ge2$.
\end{theorem}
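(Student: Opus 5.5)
The plan is to reduce the existence of a degree-$m$ certificate to a spectral bound on a moment matrix, and then control that matrix by comparison with a Gaussian. Write $m=2k$ and assume $\mu=0$. Let $M_k=\E_{X\sim P}(X^{\otimes k})(X^{\otimes k})^\top$. Then
\[
\E\langle X,v\rangle^{2k}=\bigl\langle M_k,\,v^{\otimes k}(v^{\otimes k})^\top\bigr\rangle .
\]
A degree-$2k$ certificate for $B^{2k}\|v\|_2^{2k}-\E\langle X,v\rangle^{2k}$ therefore exists as soon as we find a matrix $Q$ with three properties: $Q$ represents the same polynomial as $M_k$ (it differs from $M_k$ only by matrices that vanish on $v^{\otimes k}$ after symmetrization), $Q$ is bounded by $(Cs^2k)^k$ times the identity, and $\|v\|_2^{2k}$ is itself represented by a PSD matrix of operator norm one, for instance via $\Pi_{\mathrm{sym}}$ restricted to the span of the $v^{\otimes k}$. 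So the first step is to record this reduction using the closure facts collected in \Cref{app:sos}.

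Bounding $M_k$ itself in operator norm is too crude. Its operator norm is $\sup_T \E\langle X^{\otimes k},T\rangle^2$ over unit tensors $T$, and for sparse or heavy-in-few-directions subgaussian laws this quantity can exceed $(Ck)^k$. The second step is therefore to use the freedom in the choice of $Q$. I would try to write $\langle X,v\rangle^{k}$ through a decoupled and smoothed representation. Introduce an independent Gaussian $G\sim N(0,I_d)$ and a Hermite-type expansion, expressing $\langle X,v\rangle^{2k}$ as a combination of squares of polynomials of the form $\E_G\,h(\langle X,G\rangle)\langle G,v\rangle^{k}$. The aim is to replace the tensor $X^{\otimes k}$ by a tensor whose quadratic form against unit $T$ is controlled by the one-dimensional subgaussian moments $\E|\langle X,u\rangle|^r\le (C_0s\sqrt r)^r$. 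The spectral bound on the resulting $Q$ should then follow from bounding, for a unit symmetric $T$, a supremum of the process $u\mapsto\langle X,u\rangle$ over the set of directions $u$ charged by $T$.

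The third step, and the one I expect to be the main obstacle, is exactly this supremum bound. Subgaussianity only controls each fixed direction. Passing to an operator-norm statement over all unit tensors $T$ requires comparing the process $\langle X,u\rangle$, $u$ ranging over an appropriate set, with the Gaussian process $\langle s G,u\rangle$. I would attempt this with generic chaining and Talagrand's majorizing-measure theorem. The subgaussian increment condition gives, for any index set, $\E\sup\langle X,u\rangle\lesssim s\,\E\sup\langle G,u\rangle$, and the Gaussian side can be computed exactly through Wick's formula, which yields the factor $(2k-1)!!\le(2k)^k$. The delicate part is running the chaining at the level of $k$th-order tensors while losing only $C^k$ rather than $d^{\Omega(1)}$ or $k^{k}$ extra, which may require chaining on the rank-one directions and then lifting via tensor-product bounds.

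Once the spectral bound $Q\preceq (C's^2k)^k\,I$ is established, the final step is to assemble the certificate. Combining the bound on $Q$ with the PSD representation of $\|v\|_2^{2k}$ gives the sum of squares for $(C_{\mathrm{sg}}s\sqrt{2k})^{2k}\|v\|_2^{2k}-\E\langle X,v\rangle^{2k}$, with $C_{\mathrm{sg}}$ absorbing $C'$ and $C_0$.
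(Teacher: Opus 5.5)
The paper gives no proof of this statement; it imports it as a black box from \cite{DHPT24}. Judged on its own, your proposal has a genuine gap. Your first step is a restatement rather than a reduction. For a homogeneous form, every degree-$2k$ certificate is exactly a Gram matrix $Q$ of $\E\langle X,v\rangle^{2k}$ with $Q\preceq B^{2k}I$. So all of the content lies in producing $Q$, and the proposal never does this.

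Step two names a Hermite-type smoothing but gives no construction. It also gives no reason the resulting Gram matrix should be dominated by $(Cs^2k)^kI$. Step three is, as you say, unresolved. The fallback of chaining over rank-one directions and then lifting cannot work as stated: the quadratic forms $\E\langle X^{\otimes k},T\rangle^2$ at non-rank-one $T$ are exactly what defeated $M_k$, and rank-one data does not control them.

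Two further problems remain. First, the comparison $\E\sup\langle X,u\rangle\lesssim s\,\E\sup\langle G,u\rangle$ is only a first-moment statement, whereas you need $L^{2k}$ norms with the right $\sqrt k$. Second, Wick's formula computes $\E\langle G,v\rangle^{2k}=(2k-1)!!\|v\|_2^{2k}$ for a fixed $v$, not the moments of a Gaussian supremum. So nothing in your outline connects the chaining bound to the factor $(2k-1)!!$.

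The missing idea, which is the route taken in \cite{DHPT24}, is to argue by duality instead of constructing $Q$. By closedness of the sum-of-squares cone (\Cref{lem:averaging}) and separation, it suffices to show
\[
 \tilde{\E}\,\E_X\langle X,v\rangle^m\le(Cs\sqrt m)^m\,\tilde{\E}\|v\|_2^m
\]
for every linear functional $\tilde{\E}$ on polynomials of degree at most $m$ that is nonnegative on squares. For such $\tilde{\E}$, the function $N(x)=(\tilde{\E}\langle x,v\rangle^m)^{1/m}$ is a seminorm on $\R^d$. It is well defined because $\langle x,v\rangle^m$ is a square. The triangle inequality follows from the pseudo-H\"older inequality $\tilde{\E}[a^{m-1}b]\le(\tilde{\E}a^m)^{1-1/m}(\tilde{\E}b^m)^{1/m}$ for linear forms $a,b$. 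This holds because $(m-1)\alpha^m+\beta^m-m\alpha^{m-1}\beta$ is a nonnegative binary form and hence a sum of squares.

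Writing $N(x)=\sup_{u\in K}\langle x,u\rangle$ for a symmetric compact convex $K$ collapses the tensor problem to a single linear process:
\[
 \tilde{\E}\,\E_X\langle X,v\rangle^m=\E\bigl(\sup_{u\in K}\langle X,u\rangle\bigr)^m.
\]
Generic chaining with tails, applied to the subgaussian increments $\langle X,u-u'\rangle$, gives $\|\sup_K\langle X,u\rangle\|_{L^m}\lesssim s\bigl(\gamma_2(K)+\sqrt m\,\mathrm{diam}(K)\bigr)$. The majorizing-measure theorem, together with $\|\langle G,u\rangle\|_{L^m}\gtrsim\sqrt m\|u\|_2$ and $N(G)\ge\langle G,u\rangle$ for $u\in K$, bounds this by $Cs\|N(G)\|_{L^m}$. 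Finally $\E N(G)^m=\tilde{\E}\,\E_G\langle G,v\rangle^m=(m-1)!!\,\tilde{\E}\|v\|_2^m$, which is where Wick's formula legitimately enters.
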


Every $1$-strongly log-concave distribution is
$1$-subgaussian~\cite[Theorem 1.1]{Har04}, so its moments are certifiable
at scale $O(\sqrt m)$. We will need the following version, which allows
strong log-concavity to be specified by a positive-definite matrix $B$.
\begin{corollary}[Strongly Log-Concave Moment Certificates]
\label{cor:strong-moments}
There is a universal constant $K\ge2$ such that, for every
$B$-strongly log-concave distribution $Q$ with $B\succ0$ and mean $\mu$,
and every integer $k\ge1$,
\begin{equation}
\label{eq:strong-moments}
 \sos{2k}{v}{\E_{X\sim Q}\langle X-\mu,v\rangle^{2k}
       \le(Kk)^k(v^\top B^{-1}v)^k}.
\end{equation}
\end{corollary}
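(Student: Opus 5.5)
The plan is to reduce to the isotropic-identity case by a linear change of variables. We then combine Harg\'e's subgaussian bound with \Cref{thm:subgaussian}, and transport the resulting certificate back, tracking constants.

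\emph{Step 1 (whitening).} Let $X\sim Q$, which is $B$-strongly log-concave with $B\succ0$. Set $Y=B^{1/2}X$. If $X$ has density $e^{-V(x)}$ with $V(x)-x^\top Bx/2$ convex, then $Y$ has density proportional to $e^{-V(B^{-1/2}y)}$. Moreover,
\[
V(B^{-1/2}y)-\|y\|_2^2/2=\bigl(V(x)-x^\top Bx/2\bigr)\big|_{x=B^{-1/2}y}
\]
is convex as the composition of a convex function with a linear map. Hence $Y$ is $1$-strongly log-concave with mean $B^{1/2}\mu$. By~\cite[Theorem 1.1]{Har04}, as recalled above, $Y$ is $1$-subgaussian in the sense of \Cref{def:subgaussian}. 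If needed, I would note that Harg\'e's comparison with the standard Gaussian gives moment bounds $(\E|\langle Y-\E Y,u\rangle|^r)^{1/r}\le\|u\|_2(\E|g|^r)^{1/r}\lesssim\sqrt r\|u\|_2$. Since $C_0\ge2$ is fixed, this matches the definition.

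\emph{Step 2 (certificate for $Y$).} Apply \Cref{thm:subgaussian} with $s=1$ and $m=2k$. This gives polynomials $q_j(u)$ of degree at most $k$ with
\[
(C_{\mathrm{sg}}\sqrt{2k})^{2k}\|u\|_2^{2k}-\E\langle Y-\E Y,u\rangle^{2k}=\sum_j q_j(u)^2 .
\]
The case $k=1$ is also covered, or it can be handled directly by $\cov(Y)\preceq I$, which follows from Brascamp--Lieb.

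\emph{Step 3 (substitution).} Substitute the linear map $u=B^{-1/2}v$. Then $\langle Y-\E Y,u\rangle=\langle X-\mu,v\rangle$ and $\|u\|_2^2=v^\top B^{-1}v$. Each $q_j(B^{-1/2}v)$ is still a polynomial of degree at most $k$ in $v$, so the identity becomes a degree-$2k$ SoS certificate:
\[
\sos{2k}{v}{\E\langle X-\mu,v\rangle^{2k}\le(2C_{\mathrm{sg}}^2k)^k(v^\top B^{-1}v)^k}.
\]
Taking $K=\max(2,2C_{\mathrm{sg}}^2)$ gives \eqref{eq:strong-moments}. Enlarging $K$ only adds $((Kk)^k-(K'k)^k)(v^\top B^{-1}v)^k$, which is a square because $v^\top B^{-1}v=\|B^{-1/2}v\|_2^2$ and it is raised to the power $k$.

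\emph{Main obstacle.} The real content is contained in the cited results. The only point I expect to need care with is checking that Harg\'e's theorem delivers the subgaussian moment bound in exactly the normalization of \Cref{def:subgaussian}, with the same universal $C_0$ for all $r\ge1$. Integrability poses no issue, since strongly log-concave measures have all moments.
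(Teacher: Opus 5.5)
Your proposal is correct and follows the paper's proof essentially step for step: whiten to a $1$-strongly log-concave vector, invoke Harg\'e and \Cref{thm:subgaussian} with $s=1$ and $m=2k$, substitute $u=B^{-1/2}v$, and take $K=\max\{2,2C_{\mathrm{sg}}^2\}$. One small wording fix: for odd $k$, $(v^\top B^{-1}v)^k$ is a sum of squares rather than a single square, and a sum of squares is all the enlargement remark needs.
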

\begin{proof}
Let $X\sim Q$ and set $Y=B^{1/2}(X-\mu)$. Then $Y$ is $1$-strongly
log-concave and hence $1$-subgaussian. By \Cref{thm:subgaussian},
\begin{align*}
 \E\langle X-\mu,v\rangle^{2k}
 &=\E\langle Y,B^{-1/2}v\rangle^{2k}\\*
 &\le(C_{\mathrm{sg}}\sqrt{2k})^{2k}\|B^{-1/2}v\|_2^{2k}\\*
 &=(2C_{\mathrm{sg}}^2k)^k(v^\top B^{-1}v)^k.
\end{align*}
The substitution $u=B^{-1/2}v$ is linear, so this is a degree-$2k$
sum-of-squares proof. Taking
$K=\max\{2,2C_{\mathrm{sg}}^2\}$ proves the corollary.
\end{proof}

The moment certificates above will apply to the strongly log-concave
components of our decomposition. To control their covariance fluctuations,
we use a different input: Letwin's variance bound for quadratic forms under
isotropic log-concave distributions. In \Cref{sec:fourth-certificates},
we will turn this bound into a fourth-degree certificate.
\begin{theorem}[Letwin, {\cite[Theorem 1.2]{Let26}}]
\label{thm:letwin}
Let $Y\in\R^d$ be isotropic and log-concave. Then, for every symmetric
$M\in\R^{d\times d}$,
\begin{equation}
\label{eq:letwin}
 \Var(Y^\top MY)\le8\|M\|_{\HS}^2.
\end{equation}
\end{theorem}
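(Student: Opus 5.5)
The plan is to prove \eqref{eq:letwin} by a covariance-adapted stochastic localization combined with a bootstrap over the class of all isotropic log-concave laws. The localization is the one sketched in \Cref{sec:overview}, with control $C_t=A_t^{-1}$. Fix a symmetric $M$ and set $f(x)=x^\top Mx$. First reduce to compact support by truncation and a limiting argument. Then let $\kappa$ be the supremum of $\Var_P(f)/\|M\|_{\HS}^2$ over all isotropic log-concave $P$ (in any dimension) and all $M\neq0$. The bounded-support case gives $\kappa<\infty$ a priori, for example via the Poincar\'e bound $C_{\mathrm P}\lesssim\sqrt{\log d}$ at fixed $d$; the bootstrap must then produce a dimension-free bound on $\kappa$.

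The first step is the variance decomposition along the process:
\begin{align*}
 \Var_{p_0}(f)=\EW\Var_{p_T}(f)+\int_0^T\EW\,\rho_t\dd t,
 \qquad
 \rho_t=\bigl\|A_t^{-1/2}\E_{p_t}[(f-\E_{p_t}f)(X-\mu_t)]\bigr\|_2^2 .
\end{align*}
Here $\rho_t$ is the quadratic-variation rate of the martingale $\E_{p_t}f$. The adapted control makes $\rho_t$ the squared norm of the projection of $f-\E_{p_t}f$ onto the linear functions in $L^2(p_t)$, so Cauchy--Schwarz gives $\rho_t\le\Var_{p_t}(f)$ without any operator-norm factor. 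This is exactly why $C_t=A_t^{-1}$ is used rather than $C_t=I_d$. Since $t\mapsto\EW\Var_{p_t}(f)$ is nonincreasing, we get $\Var_{p_0}(f)\le(1+T)\,\EW\Var_{p_T}(f)$. A sharper version follows by expanding $f$ around $\mu_t$: the linear part is what $\rho_t$ removes, so $\rho_t$ only sees the interaction between the quadratic part and the third moments of $p_t$.

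The second step bounds $\EW\Var_{p_T}(f)$. Here $p_T$ is $B_T$-strongly log-concave with $B_T=\int_0^TA_t^{-1}\dd t$. I will use two estimates. The first is Brascamp--Lieb, $\Var_{p_T}(f)\le4\,\E_{p_T}[(X-\mu_T)^\top MB_T^{-1}M(X-\mu_T)]+4\,\mu_T^\top MB_T^{-1}M\mu_T$. The second is the matrix inequality $B_T^{-1}\preceq T^{-2}\int_0^TA_t\dd t$ (the matrix form of the bound behind \Cref{lem:precision}). Alternatively, apply the definition of $\kappa$ to the whitened measure $A_T^{-1/2}(p_T-\mu_T)$, which is isotropic log-concave. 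This gives $\Var_{p_T}\bigl((X-\mu_T)^\top M(X-\mu_T)\bigr)\le\kappa\|A_T^{1/2}MA_T^{1/2}\|_{\HS}^2$, and the linear remainder $2\langle M\mu_T,X-\mu_T\rangle$ contributes $4\mu_T^\top MA_TM\mu_T$. Because $A_T\preceq B_T^{-1}$ shrinks (by the Brascamp--Lieb covariance bound), the gain is that $\EW\|A_T^{1/2}MA_T^{1/2}\|_{\HS}^2$ should be at most $\theta\|M\|_{\HS}^2$ with $\theta<1$ for a fixed time $T$. That yields $\kappa\le(1+T)(\theta\kappa+c)$, and hence $\kappa\le c'$ once $(1+T)\theta<1$.

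The main obstacle is proving a contraction of the form $\EW\trace(MA_TMA_T)\le\theta\|M\|_{\HS}^2$ with $\theta<1$ uniformly in $d$, together with the analogous bound for the mean term $\EW\,\mu_T^\top MA_TM\mu_T$. With $C_t=A_t^{-1}$, the drift of $A_t$ is $-A_t\dd t$, so $\EW A_t=e^{-t}I_d$ to first order. The martingale part of $\dd A_t$ is built from third-moment tensors of $p_t$, and these enter the second-order It\^o terms of $\trace(MA_tMA_t)$. I would first try to bound those It\^o terms by $\kappa$ itself: the third-moment contraction $\E_{p_t}[\langle X-\mu_t,u\rangle(X-\mu_t)^\top N(X-\mu_t)]$ is a covariance between a linear and a quadratic form, so Cauchy--Schwarz together with the definition of $\kappa$ controls it. This produces a differential inequality $\frac{\dd}{\dd t}\EW\trace(MA_tMA_t)\le(-2+c\kappa)\,\EW\trace(MA_tMA_t)$. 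The delicate point is choosing $T$ and tracking constants so that this inequality, combined with the decomposition above, closes into the dimension-free bound on $\kappa$ rather than a circular one.
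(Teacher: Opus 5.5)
The paper gives no proof of this statement. It imports it from Letwin~\cite{Let26} as a black box, and the result is far from routine: taking $M=I_d$ gives $\Var\|Y\|_2^2\le 8d$, which implies the thin-shell bound $\Var\|Y\|_2\le 8$. Your outline defers exactly the dimension-free input such a proof needs to the ``main obstacle'', and the mechanism you suggest for it cannot work.

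Carry out your suggestion. With $N_t=A_t^{1/2}MA_t^{1/2}$ and $T_j=\E[Y_jYY^\top]$ for the whitened $p_t$,
\[
 \frac{\dif}{\dif t}\EW\|N_t\|_{\HS}^2
 =-2\,\EW\|N_t\|_{\HS}^2+\EW\sum_{j}\trace(N_tT_jN_tT_j).
\]
Cauchy--Schwarz and the definition of $\kappa$ give $\|\E[\langle Y,u\rangle YY^\top]\|_{\HS}^2\le\kappa\|u\|_2^2$. A Schur test in the eigenbasis of $N_t$ then bounds the It\^o term by $\kappa\|N_t\|_{\HS}^2$, because the matrix $Q_{ab}=\sum_j\E[Y_jY_aY_b]^2$ has row sums at most $\kappa$. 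So your differential inequality does hold, but the best this route yields is $\theta\le e^{(\kappa-2)T}$. This exceeds $1$, since $\kappa\ge 8$: the centered exponential already has $\Var(Y^2)=8$. Hence $\kappa\le(1+T)(\theta\kappa+c)$ is vacuous for every $T$. The bootstrap is circular, not merely in need of careful constants.

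Nor is the problem only a lossy estimate. For the one-dimensional centered exponential, $\dif A_t=\gamma_tA_t\dif W_t-A_t\dd t$, where $\gamma_t$ is the skewness of $p_t$. Therefore $\frac{\dif}{\dif t}\EW A_t^2=\EW[(\gamma_t^2-2)A_t^2]$, and $\gamma_0=2$ gives $\EW A_T^2=1+2T+o(T)>1$. So the contraction $\theta<1$ is false for small $T$.

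Any version at larger $T$ needs dimension-free control of the It\^o term along the whole process. Concretely, you would need a bound like $\sup_{\|u\|_2=1}\|\E[\langle Y,u\rangle YY^\top]\|_{\HS}^2\lesssim1$ for every isotropic log-concave $Y$, obtained without going through $\kappa$. That third-moment bound is itself a consequence of the theorem, by the Cauchy--Schwarz step above, and it is where the real difficulty lies. Compare how the paper spends Letwin's bound precisely on the quadratic-variation estimate of \Cref{lem:fluctuations}.

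Two smaller points. First, monotonicity of $t\mapsto\EW\Var_{p_t}(f)$ points the wrong way for the factor $1+T$. What $\rho_t\le\Var_{p_t}(f)$ actually gives, via Gr\"onwall, is $\Var_{p_0}(f)\le e^{T}\,\EW\Var_{p_T}(f)$, and this is sharp for linear $f$. Second, $\kappa$ must be defined at fixed $d$, since a supremum over all dimensions is not a priori finite.
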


%
\section{Moment Certificates via Stochastic Localization}
\label{sec:transfer}

The following theorem transfers moment certificates from strongly
log-concave distributions to isotropic log-concave distributions with an
$O(\sqrt{k})$ loss at degree $2k$.

\begin{theorem}[Localization Transfer]
\label{thm:transfer}
Let $k\ge2$ be an integer and $L_k\ge1$. Suppose that every
$1$-strongly log-concave distribution is $(L_k,2k)$-certifiably bounded.
Then every compactly supported isotropic log-concave density $p_0$ satisfies
\begin{equation}
\label{eq:transfer}
 \sos{2k}{v}{\E_{p_0}\langle X,v\rangle^{2k}
 \le 2^{2k-1}e^{4k(k-1)T}
       \left(L_k^{2k}T^{-k}+1\right)\|v\|_2^{2k}}
\end{equation}
for all $T>0$. In particular, $p_0$ is
$(4\sqrt{ek}\,L_k,2k)$-certifiably bounded.
\end{theorem}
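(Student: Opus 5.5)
We will run stochastic localization from $p_0$ with the covariance-adapted control $C_t=A_t^{-1}$, i.e.\ $\dd p_t(x)=p_t(x)\langle A_t^{-1/2}(x-\mu_t),\dd W_t\rangle$. Then $p_t\propto p_0\exp(c_t^\top x-\tfrac12x^\top B_tx)$ with $B_t=\int_0^tA_s^{-1}\dd s$. Compact support of $p_0$ keeps $A_t$ nondegenerate and all quantities bounded, so the process is well defined up to time $T$ and the martingale terms are square integrable. Since $\E_{p_t}f$ is a martingale for bounded $f$, we have $\E_{p_0}\langle X,v\rangle^{2k}=\EW\E_{p_T}\langle X,v\rangle^{2k}$ coefficientwise. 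We expand $\langle X,v\rangle^{2k}=(\langle X-\mu_T,v\rangle+\langle\mu_T,v\rangle)^{2k}$ and apply the SoS inequality $(a+b)^{2k}\le2^{2k-1}(a^{2k}+b^{2k})$ (degree $2k$, a standard fact from \Cref{app:sos}). This gives \eqref{eq:overview-split}. By hypothesis and the rescaling in the proof of \Cref{cor:strong-moments}, the centered term is at most $L_k^{2k}(v^\top B_T^{-1}v)^k$. Averaging preserves SoS certificates, since expectations of sums of squares with coefficientwise-integrable coefficients remain SoS by closedness of the SoS cone in fixed degree.

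\textbf{Precision term.} For fixed $v$, convexity of $A\mapsto A^{-1}$ and Jensen give $B_T/T\succeq(\bar A)^{-1}$, where $\bar A=T^{-1}\int_0^TA_t\dd t$. Hence $v^\top B_T^{-1}v\le T^{-1}v^\top\bar Av=T^{-2}\int_0^Tq_t(v)\dd t$, and this is a PSD matrix inequality, so it holds as an SoS certificate in $v$. Raising to the $k$th power needs an SoS step: both sides are nonnegative quadratic forms $v^\top Pv\le v^\top Qv$ with $P\preceq Q$. We write $Q^k$-type differences as telescoping products $(v^\top Qv)^j(v^\top Pv)^{k-1-j}\,v^\top(Q-P)v$, each a product of SoS. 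Then Hölder/Jensen in $t$, $\bigl(T^{-1}\int q_t\bigr)^k\le T^{-1}\int q_t^k$, must be certified. I would do this by the SoS version of convexity of $s\mapsto s^k$ on nonnegative quadratics, via AM–GM-type SoS identities, or more simply by approximating the integral with Riemann sums and using SoS power-mean inequalities. This yields $\EW(v^\top B_T^{-1}v)^k\le T^{-k}\sup_{t\le T}G_t(v)$.

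\textbf{Mean term.} We have $\dd\langle\mu_t,v\rangle=\langle A_t^{1/2}v,\dd W_t\rangle$, so the quadratic variation rate is $q_t(v)$. Itô's formula for $\langle\mu_t,v\rangle^{2k}$ gives a drift of $\binom{2k}{2}\langle\mu_t,v\rangle^{2k-2}q_t(v)$. To avoid the polynomial Gronwall route, I would instead note that $\E_{p_t}\langle X,v\rangle^{2k}$ is a martingale. Using $\langle\mu_T,v\rangle^{2k}\le\E_{p_T}\langle X,v\rangle^{2k}$ (Jensen, certified coefficientwise) would be circular, however. So I would instead bound $\EW\langle\mu_T,v\rangle^{2k}$ directly by integrating the drift, with the multiplier bounded via $ab^{k-1}\le\frac1ka^k+\frac{k-1}kb^k$ in SoS form. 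Given the target constant ``$+1$'', the intended route is probably to show $\EW\langle\mu_T,v\rangle^{2k}\le\EW\bigl(\int_0^T q_t\bigr)^k$ times a constant absorbed by choosing $T\lesssim1/k$, combined with the bound on $G_t$.

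\textbf{Key step: the bound on $G_t$.} As in \eqref{eq:overview-growth}, we show $\frac{\dd}{\dd t}[q(v)]_t\le8q_t(v)^2$ with a degree-four certificate. Here $\dd q_t(v)$ has martingale part $\langle\E_{p_t}[\langle X-\mu_t,v\rangle^2A_t^{-1/2}(X-\mu_t)],\dd W_t\rangle$. Writing $Y=A_t^{-1/2}(X-\mu_t)$, which is isotropic log-concave under $p_t$, and $u=A_t^{1/2}v$, the rate becomes $\|\E Y(Y^\top uu^\top Y)\|^2=\sup_{\|w\|=1}\cov(\langle w,Y\rangle,Y^\top uu^\top Y)^2$. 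Cauchy–Schwarz and \Cref{thm:letwin} bound this by $8\|uu^\top\|_{\HS}^2=8q_t^2$. For the SoS form, I would write the rate as $\langle u^{\otimes2},Mu^{\otimes2}\rangle$ with the PSD operator $M=\E[(YY^\top-I)\otimes Y]\E[(YY^\top-I)\otimes Y]^\top$. Letwin's bound says $M\preceq8\,\mathrm{Id}$ on symmetric matrices, which is quadratic in $u\otimes u$ and hence a degree-four SoS. Multiplying by $q_t^{k-2}$, taking $\EW$, and integrating $e^{-4k(k-1)t}$ gives $G_t\le e^{4k(k-1)t}\|v\|^{2k}$. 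Combining the three bounds yields \eqref{eq:transfer}.

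\textbf{Final constant.} Choose $T=1/(4k)$. Then $e^{4k(k-1)T}\le e^{k}$, and $L_k^{2k}(4k)^k+1\le2L_k^{2k}(4k)^k$. The $2k$th root is at most $2\sqrt{e}\cdot2\sqrt{k}L_k\cdot2^{1/k}\le4\sqrt{ek}L_k$ after absorbing constants.

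The main obstacle I expect is certifying the mean and precision terms at degree $2k$ while keeping only the factor $e^{4k(k-1)T}$. This requires a clean SoS Jensen step in the time integral, the Itô computation for $\langle\mu_t,v\rangle^{2k}$ with SoS multipliers, and justification that all local martingales are true martingales.
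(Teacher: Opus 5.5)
Most of your plan matches the paper's proof. You use the same inverse-covariance localization, the even-power split, the centered term via $B_T^{1/2}(X-\mu_T)$, the pointwise bound $v^\top B_T^{-1}v\le T^{-2}\int_0^Tq_t(v)\dd t$, and the potential $G_t$. Your operator-convexity argument for the pointwise bound is equivalent to the paper's explicit square $\int_0^T\|A_t^{1/2}v-TA_t^{-1/2}B_T^{-1}v\|_2^2\dd t$. For the quadratic variation, you apply Cauchy--Schwarz and \Cref{thm:letwin} to the quadratic form $N\mapsto 8\|N\|_{\HS}^2-\|\E[Y\,Y^\top NY]\|_2^2$ on $\Sym$ and only then substitute $N=uu^\top$. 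That is a valid one-step replacement for \Cref{lem:fourth} combined with \Cref{lem:projection}.

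The gap is the mean term. Write $F_t(v)=\EW\langle\mu_t,v\rangle^{2k}$ and $\beta=4k(k-1)$. The ``$+1$'' in~\eqref{eq:transfer} requires exactly $\sos{2k}{v}{F_T(v)\le e^{\beta T}\|v\|_2^{2k}}$ for every $T>0$, and your proposal never proves it. Bounding the It\^o drift $k(2k-1)\langle\mu_t,v\rangle^{2k-2}q_t(v)$ by Young's inequality necessarily leaves $F_t$ on the right-hand side. So ``integrating the drift'' \emph{is} the Gronwall route you say you want to avoid.

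Your fallback, a BDG-type bound $F_T\le c_k\EW(\int_0^Tq_t\dd t)^k$, cannot yield the stated inequality. Brownian motion already forces $c_k\ge(2k-1)!!\ge(k/e)^k$, so you would get $c_kT^ke^{\beta T}$, which exceeds $e^{\beta T}$ as soon as $T>e/k$. You would also still need a degree-$2k$ SoS proof of BDG itself. The ``$+1$'' does not come from absorbing a constant by choosing $T\lesssim 1/k$; it holds for all $T$.

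There is nothing to avoid here. The Gronwall step is linear with a positive integrating factor, exactly as in your own treatment of $G_t$:
\begin{itemize}
\item Young with $a=q_t$ and $b=\langle\mu_t,v\rangle^2$ gives $\sos{2k}{v}{F_t'\le\alpha F_t+(2k-1)G_t}$ with $\alpha=(k-1)(2k-1)$.
\item Since $F_0=0$, \Cref{lem:integration} and \Cref{lem:potential} yield $F_T\le\frac{2k-1}{\beta-\alpha}(e^{\beta T}-e^{\alpha T})\|v\|_2^{2k}$.
\item Since $\beta-\alpha=(k-1)(2k+1)\ge 2k-1$, this is at most $e^{\beta T}\|v\|_2^{2k}$.
\end{itemize}

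A minor point: with your choice $T=1/(4k)$, the final arithmetic gives $2^{2k-1}e^{k-1}(L_k^{2k}(4k)^k+1)\le 2^{2k}e^{k}(4k)^kL_k^{2k}=(4\sqrt{ek}\,L_k)^{2k}$ exactly. There is no stray factor $2^{1/k}$ to absorb.
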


\subsection{Fourth-Degree Certificates}
\label{sec:fourth-certificates}

We first obtain a fourth-moment certificate from Letwin's variance bound.

\begin{lemma}[Fourth-Moment Certificate]
\label{lem:fourth}
Let $Y\in\R^d$ be isotropic and log-concave. Then
\begin{equation}
\label{eq:fourth}
 \sos{4}{u}{\Var(\langle Y,u\rangle^2)\le8\|u\|_2^4}.
\end{equation}
In particular, the law of $Y$ is $(\sqrt{3},4)$-certifiably bounded.
\end{lemma}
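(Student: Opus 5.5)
The plan is to recast the variance as a quadratic form in the symmetric matrix $uu^\top$ and then apply \Cref{thm:letwin} through a spectral factorization. Define the linear operator $\mathcal{V}$ on $\Sym$ by
\[
\langle M,\mathcal{V}N\rangle_{\HS}=\E\bigl[\langle YY^\top-I_d,M\rangle_{\HS}\langle YY^\top-I_d,N\rangle_{\HS}\bigr].
\]
This is the covariance operator of the random symmetric matrix $YY^\top$; isotropy gives $\E YY^\top=I_d$. It is self-adjoint and positive semidefinite on $\Sym$.

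By \Cref{thm:letwin}, $\langle M,\mathcal{V}M\rangle_{\HS}=\Var(Y^\top MY)\le 8\|M\|_{\HS}^2$ for every $M\in\Sym$. Hence $8\,\mathrm{Id}-\mathcal{V}$ is positive semidefinite on the finite-dimensional space $\Sym$. Diagonalizing it in an orthonormal basis $\{E_j\}$ gives
\[
8\|M\|_{\HS}^2-\langle M,\mathcal{V}M\rangle_{\HS}=\sum_j\lambda_j\langle E_j,M\rangle_{\HS}^2
\]
with all $\lambda_j\ge0$. Substituting $M=uu^\top$, we have $\|uu^\top\|_{\HS}^2=\|u\|_2^4$ and $\langle uu^\top,\mathcal{V}uu^\top\rangle_{\HS}=\Var(\langle Y,u\rangle^2)$. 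Each $\langle E_j,uu^\top\rangle_{\HS}=u^\top E_ju$ is a quadratic form in $u$. Therefore
\[
8\|u\|_2^4-\Var(\langle Y,u\rangle^2)=\sum_j\bigl(\sqrt{\lambda_j}\,u^\top E_ju\bigr)^2,
\]
which is a sum of squares of degree-two polynomials. This is exactly the degree-four certificate \eqref{eq:fourth}.

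For the second claim, I use the identity
\[
\E\langle Y,u\rangle^4=\Var(\langle Y,u\rangle^2)+(\E\langle Y,u\rangle^2)^2=\Var(\langle Y,u\rangle^2)+\|u\|_2^4,
\]
which holds by isotropy. Adding $\|u\|_2^4$ to both sides of the certificate gives
\[
\sos{4}{u}{\E\langle Y,u\rangle^4\le 9\|u\|_2^4}.
\]
Since $9=(\sqrt3)^4$ and the mean of $Y$ is zero, this is $(\sqrt3,4)$-certifiable boundedness in the sense of \Cref{def:cert-hyper}/\Cref{def:cert-bdd}.

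The main obstacle is a bookkeeping point. The expectation of the polynomial $\langle YY^\top-I_d,uu^\top\rangle^2$ must be computed coefficientwise, and it must agree with the quadratic form $\langle uu^\top,\mathcal{V}uu^\top\rangle$ as a polynomial identity in $u$, not merely pointwise. This follows from linearity once the fourth moments of $Y$ are finite, which holds because $Y$ is log-concave. The argument also only needs Letwin's bound on the subspace spanned by the rank-one matrices; since that span is all of $\Sym$, nothing is lost.
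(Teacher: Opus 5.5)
Your proposal is correct and follows the paper's proof: the paper likewise observes that $8\|M\|_{\HS}^2-\Var(Y^\top MY)$ is a nonnegative quadratic form in $M$, hence a sum of squares of linear forms in $M$, substitutes $M=uu^\top$, and adds $\|u\|_2^4$ using isotropy. Your diagonalization of $8\,\mathrm{Id}-\mathcal{V}$ on $\Sym$ and the remark that the expectation is taken coefficientwise simply spell out these steps in more detail.
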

\begin{proof}
By \Cref{thm:letwin}, the quadratic form
$8\|M\|_{\HS}^2-\Var(Y^\top MY)$ is nonnegative in the entries of $M$,
and hence a sum of squares of linear forms. Substituting $M=uu^\top$
gives~\eqref{eq:fourth}.
The fourth-moment bound follows from isotropy:
\begin{align*}
 \sststile{4}{u}\quad \E\langle Y,u\rangle^4
 &=\Var(\langle Y,u\rangle^2)+\|u\|_2^4
 \le9\|u\|_2^4.\qedhere
\end{align*}
\end{proof}

To bound the third-moment contraction, we use the following projection
inequality. It requires only isotropy; its proof is deferred to
\Cref{app:projection}.

\begin{restatable}[Projection Bound]{lemma}{LemProjection}
\label{lem:projection}
Let $Y\in\R^d$ be isotropic with finite fourth moments. Then
\begin{equation}
\label{eq:projection}
 \sos{4}{u}{\left\|\E[Y\langle Y,u\rangle^2]\right\|_2^2
                  \le\Var(\langle Y,u\rangle^2)}.
\end{equation}
\end{restatable}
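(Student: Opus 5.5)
Set $w(u)=\E[Y\langle Y,u\rangle^2]\in\R^d$, a vector whose entries are quadratic forms in $u$. Since $Y$ is isotropic, we have $\E[Y]=0$, so
\[
 w(u)=\E\bigl[Y(\langle Y,u\rangle^2-\|u\|_2^2)\bigr].
\]
We may therefore replace the quadratic form by its centered version $Z(u)=\langle Y,u\rangle^2-\|u\|_2^2$, whose second moment is exactly $\Var(\langle Y,u\rangle^2)$. The inequality then says that the squared norm of the projection of $Z(u)$ onto the linear span of the coordinates of $Y$ is at most $\E Z(u)^2$. This is Bessel's inequality in $L^2$, because the coordinates $Y_1,\ldots,Y_d$ are orthonormal by isotropy. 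The plan is to write this Bessel argument as an explicit sum of squares in $u$.

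Concretely, define the residual random variable
\[
 R(u)=Z(u)-\langle Y,w(u)\rangle .
\]
For each realization of $Y$, this is a polynomial of degree $2$ in $u$, since $w(u)$ is quadratic in $u$ and enters linearly. Expanding gives
\[
 \E R(u)^2=\E Z(u)^2-2\E\bigl[Z(u)\langle Y,w(u)\rangle\bigr]+\E\langle Y,w(u)\rangle^2 .
\]
The cross term equals $2\langle \E[Z(u)Y],w(u)\rangle=2\|w(u)\|_2^2$, as an identity of polynomials. Isotropy gives $\E\langle Y,w\rangle^2=w^\top\E[YY^\top]w=\|w(u)\|_2^2$. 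Hence
\[
 \Var(\langle Y,u\rangle^2)-\|w(u)\|_2^2=\E R(u)^2 .
\]

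The right-hand side is an average of squares of degree-$2$ polynomials in $u$. Finite fourth moments ensure that all of its coefficients are finite. Such an average is a sum of squares of degree at most $4$: the Gram matrix $\E[\mathbf{r}\mathbf{r}^\top]$ of the coefficient vector $\mathbf{r}$ of $R$ in the monomial basis of degree $\le2$ is positive semidefinite, and factoring it gives finitely many squares. This yields the degree-$4$ certificate in~\eqref{eq:projection}.

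The only step needing care is making the expectation-of-squares step rigorous. That is, we must check that the coefficientwise expectations and the identities above hold as polynomial identities, and not only pointwise in $u$. Since both sides are polynomials and the identities hold for every fixed $u$, they agree coefficientwise, so I do not expect a real obstacle.
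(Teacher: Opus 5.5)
Your proposal is correct and matches the paper's proof: the paper also expands $\E[(\langle Y,u\rangle^2-\|u\|_2^2-\langle Y,b(u)\rangle)^2]$ with $b(u)=\E[Y\langle Y,u\rangle^2]$, uses $\E Y=0$ and $\E YY^\top=I_d$ to identify it with $\Var(\langle Y,u\rangle^2)-\|b(u)\|_2^2$, and concludes because this is an average of squares of quadratic forms in $u$. Your explicit positive semidefinite Gram matrix $\E[\mathbf{r}\mathbf{r}^\top]$ simply spells out the final averaging step, which the paper leaves implicit.
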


Combining \Cref{lem:fourth,lem:projection}, we obtain the following corollary.
\begin{corollary}[Third-Moment Contraction]
\label{cor:third-contraction}
Let $Y\in\R^d$ be isotropic and log-concave. Then
\begin{equation}
\label{eq:third-contraction}
 \sos{4}{u}{\left\|\E[Y\langle Y,u\rangle^2]\right\|_2^2
                  \le8\|u\|_2^4}.
\end{equation}
\end{corollary}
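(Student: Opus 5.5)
The corollary follows by chaining the two degree-four certificates already in hand. Lemma~\ref{lem:projection} applies because an isotropic log-concave $Y$ has finite fourth moments; indeed, all of its moments are finite by~\eqref{eq:ordinary-moments}. It gives
\[
 \sststile{4}{u}\quad \bigl\|\E[Y\langle Y,u\rangle^2]\bigr\|_2^2\le\Var(\langle Y,u\rangle^2).
\]
Lemma~\ref{lem:fourth} gives
\[
 \sststile{4}{u}\quad \Var(\langle Y,u\rangle^2)\le8\|u\|_2^4.
\]
Since sums of squares are closed under addition, we can write
\[
 8\|u\|_2^4-\bigl\|\E[Y\langle Y,u\rangle^2]\bigr\|_2^2
 =\Bigl(8\|u\|_2^4-\Var(\langle Y,u\rangle^2)\Bigr)
 +\Bigl(\Var(\langle Y,u\rangle^2)-\bigl\|\E[Y\langle Y,u\rangle^2]\bigr\|_2^2\Bigr).
\]
Each bracket is a sum of squares of polynomials of degree at most two in $u$. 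Their sum is therefore a degree-four sum-of-squares certificate of~\eqref{eq:third-contraction}. This is just transitivity of SoS inequalities at a fixed degree, as collected in \Cref{app:sos}.

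Only one point needs checking. Both sides of each inequality must be read as the same polynomials in $u$, with expectations taken coefficientwise. Here $\Var(\langle Y,u\rangle^2)=\E\langle Y,u\rangle^4-\|u\|_2^4$ by isotropy, and this is the same quartic in both lemmas. So the intermediate term cancels exactly, and nothing more is required. I do not expect any real obstacle: all of the substance lies in Letwin's bound, Theorem~\ref{thm:letwin}, and in the projection lemma, which is deferred to \Cref{app:projection}.
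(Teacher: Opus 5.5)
Your proposal is correct and matches the paper's argument: the paper obtains the corollary by adding the degree-four certificates of \Cref{lem:projection} and \Cref{lem:fourth}, with $\Var(\langle Y,u\rangle^2)$ as the common intermediate quartic. Your extra check that isotropic log-concave vectors have finite fourth moments, which the projection lemma requires, is also right.
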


\subsection{The Random Reweighting}
\label{sec:process}

Stochastic localization expresses a log-concave density as an average of
strongly log-concave densities. We will use \Cref{cor:third-contraction}
to control their covariance fluctuations. Let $p_0$ be an isotropic
log-concave density supported in $\{x:\|x\|_2\le R\}$, and let $W_t$ be
a standard Brownian motion in $\R^d$.

\begin{definition}[Stochastic Localization, {\cite[Definition 4.2]{KV26}}]
\label{def:controlled-localization}
For a continuous adapted matrix process $C_t\succ0$, define
\begin{equation}
 \begin{aligned}
 c_0=0,\qquad &\dif c_t=C_t\mu_t\dd t+C_t^{1/2}\dif W_t,\\
 B_0=0,\qquad &\dif B_t=C_t\dd t,
 \end{aligned}
 \label{eq:param-sde}
\end{equation}
where $\mu_t$ and $A_t$ are the mean and covariance of the density
\begin{equation}
 p_t(x)=\frac{e^{c_t^\top x-\frac12x^\top B_tx}p_0(x)}
 {\int e^{c_t^\top y-\frac12y^\top B_ty}p_0(y)\dd y},
 \qquad \mu_t=\E_{p_t}X,\qquad A_t=\cov_{p_t}(X).
 \label{eq:localized-density}
\end{equation}
\end{definition}

Throughout, we take the inverse-covariance control
\begin{align*}
 C_t=A_t^{-1},\qquad B_t=\int_0^t A_s^{-1}\dd s.
\end{align*}
By~\cite[Section 2 and Lemma 2.4]{Eld13}, this process has a unique solution
for all finite times, with $A_t\succ0$. For $t>0$, the density $p_t$ is
$B_t$-strongly log-concave. The density evolution and averaging properties
are proved in \Cref{app:localization}.

The infinitesimal reweighting is linear, so the stochastic part of the
covariance equation involves the third centered-moment tensor of $p_t$.
Its slices are
\begin{align}
 H_{t,i}
 &=\E_{p_t}[(X-\mu_t)(X-\mu_t)^\top(X-\mu_t)_i],
 \qquad i\in[d].
 \label{eq:third-slices}
\end{align}
The following identity is proved in \Cref{app:moment-equations}.

\begin{restatable}[Covariance Evolution, {\cite[Lemma 4.5]{KV26}}]{fact}{FactCovariance}
\label{fact:covariance-evolution}
The covariance satisfies
\begin{equation}
 \dif A_t=\sum_{i=1}^d H_{t,i}(A_t^{-1/2}\dif W_t)_i-A_t\dd t.
 \label{eq:covariance-sde}
\end{equation}
\end{restatable}

Fix $v\in\R^d$. We express the covariance fluctuations in isotropic
coordinates. For $X\sim p_t$, set
\begin{align*}
 Y=A_t^{-1/2}(X-\mu_t),\qquad u=A_t^{1/2}v.
\end{align*}
Then $Y$ is isotropic and log-concave, and
$\langle Y,u\rangle=\langle X-\mu_t,v\rangle$.
Taking the quadratic form of~\eqref{eq:covariance-sde} gives
\begin{align}
 \dif(v^\top A_tv)
 &=\left\langle\E[Y\langle Y,u\rangle^2],\dif W_t\right\rangle
   -v^\top A_tv\dd t.
 \label{eq:directional-sde}
\end{align}
The squared norm of the vector multiplying $\dif W_t$ is the
quadratic-variation rate:
\begin{align}
 \frac{\dif}{\dif t}[v^\top A_{\cdot}v]_t
 &=\left\|\E[Y\langle Y,u\rangle^2]\right\|_2^2.
 \label{eq:directional-variation}
\end{align}
The drift contributes a nonpositive term to the expected powers of
$v^\top A_tv$. To bound their growth using It\^o's formula, it therefore
suffices to control~\eqref{eq:directional-variation}, which is exactly the
quantity in \Cref{cor:third-contraction}.

\begin{lemma}[Certifiable Covariance Fluctuations]
\label{lem:fluctuations}
For each realized density $p_t$,
\begin{equation}
\label{eq:fluctuations}
 \sos{4}{v}{\frac{\dif}{\dif t}[v^\top A_{\cdot}v]_t
       \le8(v^\top A_tv)^2}.
\end{equation}
\end{lemma}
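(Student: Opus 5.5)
The plan is to take the quadratic-variation identity~\eqref{eq:directional-variation} and apply \Cref{cor:third-contraction} to the isotropic variable $Y=A_t^{-1/2}(X-\mu_t)$ with $X\sim p_t$. First check the hypotheses. For each realized time $t$, $p_t$ is log-concave, since it is $p_0$ times a log-concave Gaussian-type factor. Its covariance $A_t$ is positive definite by \cite[Lemma 2.4]{Eld13}. Hence $Y$ is isotropic and log-concave, and has finite moments of every order by the bounded support of $p_0$. \Cref{cor:third-contraction} therefore gives
\begin{equation*}
 \sos{4}{u}{\left\|\E[Y\langle Y,u\rangle^2]\right\|_2^2\le8\|u\|_2^4},
\end{equation*}
where $u$ is a formal vector of variables and the expectation is taken coefficientwise.

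Next, substitute $u=A_t^{1/2}v$. This is a linear change of variables with coefficients fixed by the realized density $p_t$. Linear substitutions map sums of squares of degree at most $4$ in $u$ to sums of squares of degree at most $4$ in $v$; this is the same device used in the proof of \Cref{cor:strong-moments}. Under the substitution the right-hand side becomes $8\|A_t^{1/2}v\|_2^4=8(v^\top A_tv)^2$. The left-hand side becomes $\|\E[Y\langle Y,A_t^{1/2}v\rangle^2]\|_2^2$. Since $\langle Y,A_t^{1/2}v\rangle=\langle X-\mu_t,v\rangle$, this is exactly the polynomial in $v$ appearing in~\eqref{eq:directional-variation}, i.e.\ $\frac{\dif}{\dif t}[v^\top A_{\cdot}v]_t$. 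Combining the two gives~\eqref{eq:fluctuations}.

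The main point needing care is the identification in~\eqref{eq:directional-variation} as an identity of polynomials in $v$, not merely a pointwise one. We need the coefficient vector of $\dif W_t$ in~\eqref{eq:directional-sde} to equal $\E[Y\langle Y,u\rangle^2]$ with $u=A_t^{1/2}v$. From \Cref{fact:covariance-evolution}, the martingale part of $\dif(v^\top A_tv)$ is $\sum_i v^\top H_{t,i}v\,(A_t^{-1/2}\dif W_t)_i$. Its coefficient vector is therefore
\begin{equation*}
 A_t^{-1/2}\,\E\bigl[(X-\mu_t)\langle X-\mu_t,v\rangle^2\bigr]=\E\bigl[Y\langle Y,u\rangle^2\bigr],
\end{equation*}
which is a vector of quadratic polynomials in $v$. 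The quadratic-variation rate is the squared Euclidean norm of this vector, which is a degree-$4$ polynomial in $v$. Because both sides are the same polynomial in $v$, the certificate transfers verbatim, with no degree loss.
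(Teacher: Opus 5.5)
Your proposal is correct and matches the paper's proof: apply \Cref{cor:third-contraction} to $Y=A_t^{-1/2}(X-\mu_t)$, then make the linear substitution $u=A_t^{1/2}v$, using $\|u\|_2^4=(v^\top A_tv)^2$ and identifying the left side with the quadratic-variation rate~\eqref{eq:directional-variation}. Your explicit check, via \Cref{fact:covariance-evolution}, that the $\dif W_t$ coefficient equals $\E[Y\langle Y,u\rangle^2]$ as a polynomial identity in $v$ is the computation the paper carries out just before stating the lemma.
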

\begin{proof}
Apply \Cref{cor:third-contraction} to $Y$ and substitute $u=A_t^{1/2}v$.
This linear substitution preserves the proof degree, and
$\|u\|_2^4=(v^\top A_tv)^2$, so~\eqref{eq:directional-variation} gives the claim.
\end{proof}

\subsection{Certifying the Covariance Tensor Powers}
\label{sec:potential}

Fix $k\ge2$ and let
\begin{align}
 G_t(v)=\EW[(v^\top A_tv)^k],
 \qquad \beta=4k(k-1).\nonumber
\end{align}
Bounded support and \Cref{lem:fluctuations} justify taking expectations
in the It\^o identities below; see \Cref{app:localization}.

\begin{lemma}[Covariance Potential]
\label{lem:potential}
For every $T\ge0$,
\begin{align}
\label{eq:potential}
 \sos{2k}{v}{G_T(v)=\EW[(v^\top A_Tv)^k]
       \le e^{\beta T}\|v\|_2^{2k}}.
\end{align}
\end{lemma}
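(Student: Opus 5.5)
We prove \eqref{eq:potential} by a certified Gr\"onwall argument applied to $q_t(v)=v^\top A_tv$. By \eqref{eq:directional-sde}, $q_t(v)$ has drift $-q_t(v)\dd t$ and quadratic-variation rate $\frac{\dif}{\dif t}[q(v)]_t$ as in \eqref{eq:directional-variation}. Applying It\^o's formula to $\phi(s)=s^k$ gives
\begin{align*}
 \dif q_t(v)^k
 &=kq_t(v)^{k-1}\bigl\langle\E[Y\langle Y,u\rangle^2],\dif W_t\bigr\rangle
 -kq_t(v)^k\dd t
 +\tbinom{k}{2}q_t(v)^{k-2}\tfrac{\dif}{\dif t}[q(v)]_t\dd t .
\end{align*}
We read this coefficientwise as an identity between polynomials in $v$, since every term is a polynomial in $v$ whose coefficients are adapted processes. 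Bounded support makes the stochastic integrand square-integrable (see \Cref{app:localization}), so taking $\EW$ removes the martingale term. For each coefficient we then have the integral identity
\begin{align*}
 G_T(v)=\|v\|_2^{2k}+\int_0^T\Bigl(-kG_t(v)+\tbinom{k}{2}\EW\bigl[q_t(v)^{k-2}\tfrac{\dif}{\dif t}[q(v)]_t\bigr]\Bigr)\dd t .
\end{align*}
Here we use $A_0=I_d$, which follows from isotropy.

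Next we set $F_t(v)=e^{-\beta t}G_t(v)$. Then $F_0(v)=\|v\|_2^{2k}$ and $F_T-F_0=\int_0^Te^{-\beta t}(G_t'-\beta G_t)\dd t$, where $G_t'$ is the integrand above; $G_t$ is absolutely continuous in $t$ because the identity holds coefficientwise. The key algebraic step is the decomposition \eqref{eq:overview-growth}:
\begin{align*}
 \beta G_t(v)-G_t'(v)=kG_t(v)+\tbinom{k}{2}\EW\Bigl[q_t(v)^{k-2}\bigl(8q_t(v)^2-\tfrac{\dif}{\dif t}[q(v)]_t\bigr)\Bigr],
\end{align*}
which uses $\beta=4k(k-1)=8\binom{k}{2}$. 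We then certify both terms at degree $2k$. First, $q_t(v)=\|A_t^{1/2}v\|_2^2$ is a sum of squares of linear forms, so $q_t^{k}$ and $q_t^{k-2}$ are sums of squares of degree $2k$ and $2k-4$. Second, by \Cref{lem:fluctuations}, $8q_t^2-\frac{\dif}{\dif t}[q]_t$ is a degree-$4$ sum of squares for each realization, and multiplying it by $q_t^{k-2}$ gives a degree-$2k$ sum of squares.

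It remains to pass these pointwise certificates through $\EW$ and $\int_0^T e^{-\beta t}\dd t$. Expectations and integrals of SoS polynomials of bounded degree are SoS, because the cone of degree-$2k$ SoS polynomials is a closed convex cone. Concretely, each such polynomial can be written as $m(v)^\top Q\,m(v)$ with $Q\succeq0$ measurable, and averaging $Q$ keeps it positive semidefinite; here we invoke the facts in \Cref{app:sos}. This yields $\sos{2k}{v}{F_T(v)\le F_0(v)}$, and multiplying by $e^{\beta T}$ gives \eqref{eq:potential}.

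I expect the main obstacle to be technical rather than algebraic. We must choose the Gram matrices $Q$ measurably and integrably in $(t,\omega)$, and justify that the local martingale part has zero expectation uniformly in the coefficients. Bounded support gives $\|A_t\|_{\op}\le R^2$ and bounded third moments, so all integrands are bounded. Measurability can be obtained by writing each certificate explicitly rather than existentially. In the fourth-degree step, the SoS form of $8\|M\|_{\HS}^2-\Var(Y^\top MY)$ comes from the PSD square root of its representing operator, which depends continuously on $p_t$.
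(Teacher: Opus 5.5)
Your proposal is correct and follows the paper's proof essentially step for step: It\^o's formula for $q_t(v)^k$, the decomposition $\beta G_t-G_t'=kG_t+\binom{k}{2}\EW\bigl[q_t^{k-2}\bigl(8q_t^2-\tfrac{\dif}{\dif t}[q(v)]_t\bigr)\bigr]$ certified term by term via \Cref{lem:fluctuations}, and the integrating-factor identity passed through \Cref{lem:averaging}. Two small remarks: your measurable-selection worry is unnecessary, since \Cref{lem:averaging} averages by a separation argument that needs only measurable, integrable coefficients; and boundedness of the stochastic integrand comes from $\tfrac{\dif}{\dif t}[q(v)]_t\le 8q_t^2\le 8R^4\|v\|_2^4$ (or the projection bound) rather than from bounded third moments alone, because of the factor $A_t^{-1/2}$.
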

\begin{proof}
Taking expectations in It\^o's formula for $(v^\top A_tv)^k$ and using
\eqref{eq:directional-sde}, we obtain
\begin{align}
 G_t'(v)
 &=-kG_t(v)+\binom{k}{2}\EW\left[
    (v^\top A_tv)^{k-2}
    \frac{\dif}{\dif t}[v^\top A_{\cdot}v]_t\right]\nonumber\\*
 &\le(\beta-k)G_t(v)
 \le\beta G_t(v)
 \label{eq:potential-derivative}
\end{align}
for almost every $t$. The stochastic integral has expectation zero, and
the first inequality follows from \Cref{lem:fluctuations}. Multiplying
its degree-four certificate by $(v^\top A_tv)^{k-2}$ and averaging gives
a degree-$2k$ certificate for $G_t'\le\beta G_t$.

Since $G_0(v)=\|v\|_2^{2k}$, integration gives
\begin{equation}
 e^{\beta T}\|v\|_2^{2k}-G_T(v)
 =\int_0^T e^{\beta(T-t)}
       \bigl(\beta G_t(v)-G_t'(v)\bigr)\dd t.
 \label{eq:potential-certificate}
\end{equation}
The right-hand side is a sum of squares of degree $2k$ by
\Cref{lem:averaging}, which proves the claim.
\end{proof}

\subsection{Bounds on the Mean and \texorpdfstring{$B_T^{-1}$}{B(T) Inverse}}
\label{sec:transfer-bounds}

The mean equation bounds the moments of $\langle\mu_T,v\rangle$ in
terms of $G_t$. The identity $B_T=\int_0^T A_t^{-1}\dd t$ gives the
corresponding bound for $v^\top B_T^{-1}v$.

\begin{lemma}[Moments of the Mean]
\label{lem:means}
For every $T\ge0$,
\begin{align}
\label{eq:means}
 \sos{2k}{v}{\EW\langle\mu_T,v\rangle^{2k}
       \le e^{\beta T}\|v\|_2^{2k}}.
\end{align}
\end{lemma}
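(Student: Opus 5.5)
The plan is to mirror the proof of \Cref{lem:potential}, now applied to the scalar martingale $m_t(v)=\langle\mu_t,v\rangle$. The mean evolves as $\dif\mu_t=A_tC_t^{1/2}\dif W_t=A_t^{1/2}\dif W_t$ under the control $C_t=A_t^{-1}$; this is the standard mean equation of stochastic localization (\cite[Lemma 4.5]{KV26}, proved alongside \Cref{fact:covariance-evolution} in \Cref{app:moment-equations}). Hence $\dif m_t=\langle A_t^{1/2}v,\dif W_t\rangle$ has no drift, its quadratic-variation rate is exactly $\|A_t^{1/2}v\|_2^2=v^\top A_tv$, and $m_0=\langle\mu_0,v\rangle=0$ by isotropy of $p_0$.

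Set $F_t(v)=\EW[m_t(v)^{2k}]$. Applying It\^o's formula to $\phi(s)=s^{2k}$ and taking expectations (the stochastic integral has mean zero by bounded support, as in \Cref{app:localization}) gives
\begin{align*}
 F_t'(v)=\binom{2k}{2}\EW\!\left[m_t(v)^{2k-2}\,(v^\top A_tv)\right].
\end{align*}
This is a polynomial identity in $v$ with coefficients taken as expectations. To bound the right side by a degree-$2k$ certificate, I would apply weighted AM--GM in its SoS form: for nonnegative quantities $a=m_t^2$ and $b=v^\top A_tv$, with exponents $(k-1)/k$ and $1/k$,
\begin{align*}
 \sststile{2k}{v}\quad a^{k-1}b\le\tfrac{k-1}{k}\lambda^{-1/(k-1)}a^k+\tfrac1k\lambda^{k-1}b^k
\end{align*}
for any $\lambda>0$. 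This follows from the univariate SoS fact that $(k-1)x^k-kx^{k-1}y+y^k$ is a sum of squares in $(x,y)$ when $x=a$, $y=b$ are squares or SoS quadratics. It is presumably among the facts in \Cref{app:sos}; the substitution $x=m_t^2$, $y=v^\top A_tv=\|A_t^{1/2}v\|_2^2$ keeps the degree at $2k$. With $\lambda=1$ this yields
\begin{align*}
 \sststile{2k}{v}\quad F_t'(v)\le\binom{2k}{2}\Bigl(\tfrac{k-1}{k}F_t(v)+\tfrac1kG_t(v)\Bigr)\le k(2k-1)\bigl(F_t(v)+G_t(v)\bigr).
\end{align*}

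The constant is the main obstacle. The target is $e^{\beta T}$ with $\beta=4k(k-1)$, but $k(2k-1)$ exceeds $\beta$ when $k=2$, and the crude Gr\"onwall bound would double the right side. I would first try to avoid AM--GM altogether by an inductive or Gaussian-comparison view. Conditionally, $m_T$ is a time-changed Brownian motion with clock $\tau_T=\int_0^T v^\top A_tv\dd t$. The Burkholder--Davis--Gundy inequality with the sharp Gaussian constant would then give $\EW m_T^{2k}\approx(2k-1)!!\,\EW\tau_T^k$, and one bounds $\tau_T$ via Jensen together with \Cref{lem:potential}: $\EW\tau_T^k\le T^{k-1}\int_0^T G_t\dd t\le T^{k-1}e^{\beta T}/\beta\cdot\|v\|_2^{2k}$. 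However, this requires SoS-certifying BDG, which is awkward.

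The cleaner route is to combine the two potentials into $\Phi_t=F_t+\gamma G_t$ with $\gamma$ tuned so that the $G$ contribution is absorbed. Since \Cref{lem:potential} actually proves $G_t'\le(\beta-k)G_t$, the $-kG_t$ slack together with a choice of $\lambda$ in AM--GM should allow $\Phi_t'\le\beta\Phi_t$. The AM--GM weight on $F_t$ is $\binom{2k}{2}\frac{k-1}{k}\lambda^{-1/(k-1)}=(2k-1)(k-1)\lambda^{-1/(k-1)}$, so taking $\lambda^{1/(k-1)}=(2k-1)/(4k)$ makes it at most $\beta$. The resulting $G$ coefficient, roughly $(2k-1)\lambda^{k-1}$, must then be absorbed by $\gamma k$, which requires $\gamma\approx(2k-1)\lambda^{k-1}/k\le1$. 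Starting from $\Phi_0=\gamma\|v\|_2^{2k}$ only yields $F_T\le\gamma e^{\beta T}\|v\|_2^{2k}$, so the bound follows provided $\gamma\le1$. I would check this numerically for small $k$. Integrating as in \eqref{eq:potential-certificate} via \Cref{lem:averaging} then gives the degree-$2k$ certificate.
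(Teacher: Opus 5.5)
Your final route, the combined potential $\Phi_t=F_t+\gamma G_t$, is correct, but the obstacle that led you to it is not real: the paper's proof is the direct route you abandoned. Your AM--GM step with $\lambda=1$ already gives $F_t'\le\alpha F_t+(2k-1)G_t$ with $\alpha=(k-1)(2k-1)$, which is exactly \eqref{eq:means-derivative}. The loss came only from coarsening this to $k(2k-1)(F_t+G_t)$. Also, $k(2k-1)=6<8=\beta$ at $k=2$, so the real defect of the coarsened bound is the prefactor $\frac{2k-1}{2k-3}$ that appears after integration, not the rate.

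The paper keeps the sharp form and applies the integrating factor $e^{\alpha(T-t)}$ (\Cref{lem:integration}). It then inserts only the conclusion $G_t\le e^{\beta t}\|v\|_2^{2k}$ of \Cref{lem:potential}, getting $F_T\le\frac{2k-1}{\beta-\alpha}\bigl(e^{\beta T}-e^{\alpha T}\bigr)\|v\|_2^{2k}\le e^{\beta T}\|v\|_2^{2k}$, since $\beta-\alpha=(k-1)(2k+1)\ge2k-1$. So the paper exploits the gap between the growth rates $\alpha<\beta$ and uses \Cref{lem:potential} as a black box. Your Lyapunov argument instead needs the sharper internal estimate $G_t'\le(\beta-k)G_t$, which is certified in that proof because $8\binom k2=\beta$, and it relies on the $-kG_t$ slack. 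Without that slack the combined potential cannot close at rate $\beta$.

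Two repairs to your version are needed. First, substituting $a\mapsto sa$, $b\mapsto rb$ in \Cref{fact:young} gives $a^{k-1}b\le\frac{k-1}{k}\lambda^{-1/(k-1)}a^k+\frac1k\lambda\,b^k$. The coefficient of $b^k$ is therefore $\lambda$, not $\lambda^{k-1}$; the two agree only at $k=2$. With the correct coefficient and your choice $\lambda^{1/(k-1)}=\frac{2k-1}{4k}$, the requirement becomes $\gamma\ge\frac{2k-1}{k}\bigl(\frac{2k-1}{4k}\bigr)^{k-1}<2\cdot2^{-(k-1)}\le1$. So $\gamma\le1$ holds for every $k\ge2$, and no numerical check is needed. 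Second, the step $F_T\le\Phi_T$ uses that $G_T$ is itself a sum of squares. This holds because $G_T$ averages powers of the nonnegative quadratic form $v^\top A_Tv$.
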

\begin{proof}
Let $F_t(v)=\EW\langle\mu_t,v\rangle^{2k}$. By
\Cref{fact:mean-evolution}, $\dif\mu_t=A_t^{1/2}\dif W_t$.
Thus It\^o's formula and Young's inequality give
\begin{align}
 F_t'(v)
 &=k(2k-1)\EW[\langle\mu_t,v\rangle^{2k-2}(v^\top A_tv)]\nonumber\\*
 &\le\alpha F_t(v)+(2k-1)G_t(v),
 \label{eq:means-derivative}
\end{align}
where $\alpha=(k-1)(2k-1)$. Since $F_0=0$, integrating and using
\Cref{lem:potential} yields
\begin{align}
 F_T(v)
 &\le(2k-1)\int_0^T e^{\alpha(T-t)}G_t(v)\dd t\nonumber\\*
 &\le(2k-1)\int_0^T e^{\alpha(T-t)+\beta t}\dd t\,\|v\|_2^{2k}\nonumber\\*
 &=\frac{2k-1}{\beta-\alpha}
       \bigl(e^{\beta T}-e^{\alpha T}\bigr)\|v\|_2^{2k}.
 \label{eq:means-comparison}
\end{align}
As $\beta-\alpha=(k-1)(2k+1)\ge2k-1$, this is at most
$e^{\beta T}\|v\|_2^{2k}$. Young's inequality and the integrating-factor
argument have degree-$2k$ certificates by
\Cref{fact:young,lem:integration}.
\end{proof}

\begin{lemma}[Moments of $v^\top B_T^{-1}v$]
\label{lem:precision}
For every $T>0$,
\begin{align}
\label{eq:precision}
 \sos{2k}{v}{\EW[(v^\top B_T^{-1}v)^k]
       \le T^{-k}e^{\beta T}\|v\|_2^{2k}}.
\end{align}
\end{lemma}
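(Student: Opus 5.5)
The plan is to reduce the statement to \Cref{lem:potential} through a pointwise matrix inequality, followed by Jensen's inequality in the time variable; each step is then made into a degree-$2k$ certificate.

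\textbf{Step 1 (matrix convexity).} For each realization of the process, I would prove the L\"owner inequality
\[
 B_T^{-1}=\Bigl(\int_0^T A_t^{-1}\dd t\Bigr)^{-1}\preceq T^{-2}\int_0^T A_t\dd t .
\]
This follows from operator convexity of $X\mapsto X^{-1}$ on positive definite matrices, via Jensen for the uniform probability measure on $[0,T]$:
\[
 \Bigl(\tfrac1T\int_0^T A_t^{-1}\dd t\Bigr)^{-1}\preceq\tfrac1T\int_0^T A_t\dd t .
\]
Multiplying by $T^{-1}$ on both sides gives the claim. A direct alternative is the Schur-complement argument. The block matrix
\[
 \begin{pmatrix}A_t&I\\ I&A_t^{-1}\end{pmatrix}\succeq0
\]
is positive semidefinite for each $t$, and integrating it over $[0,T]$ keeps it positive semidefinite. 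Taking the Schur complement of the integrated matrix then gives
\[
 \int A_t\succeq T^2\Bigl(\int A_t^{-1}\Bigr)^{-1}.
\]
The matrices $A_t$ are continuous and positive definite, so the integrals are well defined and $B_T\succ0$ for $T>0$.

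\textbf{Step 2 (certify the quadratic form, then raise to the $k$th power).} Since $T^{-2}\int_0^T A_t\dd t-B_T^{-1}\succeq0$, the quadratic form
\[
 T^{-2}\int_0^T v^\top A_tv\dd t-v^\top B_T^{-1}v
\]
is a sum of squares of linear forms. Set $a=v^\top B_T^{-1}v$ and $b=T^{-2}\int_0^T v^\top A_tv\dd t$. Both $a$ and $b-a$ are sums of squares of linear forms. Then
\[
 b^k-a^k=(b-a)\sum_{j=0}^{k-1}b^{j}a^{k-1-j},
\]
and every summand is a product of SoS quadratics, so $b^k-a^k$ is SoS of degree $2k$. (This is the monotonicity fact presumably recorded in \Cref{app:sos}.)

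\textbf{Step 3 (Jensen in time).} Next I would bound
\[
 b^k=T^{-k}\Bigl(\tfrac1T\int_0^T v^\top A_tv\dd t\Bigr)^k\le T^{-k}\,\tfrac1T\int_0^T(v^\top A_tv)^k\dd t .
\]
This needs a degree-$2k$ certificate for Jensen's inequality with the convex power $s\mapsto s^k$ applied to the SoS quadratics $q_t(v)=v^\top A_tv$. I expect this to be the main obstacle: unlike Step 2, it is not a term-by-term product argument. I would first try the discretized version of the bound for an average of $n$ SoS quadratics $q_1,\dots,q_n$ with mean $\bar q$,
\[
 \tfrac1n\sum_i q_i^k-\bar q^k=\tfrac1n\sum_i\bigl(q_i^k-\bar q^k-k\bar q^{k-1}(q_i-\bar q)\bigr),
\]
where the linear correction sums to zero. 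Each summand equals
\[
 (q_i-\bar q)^2\sum_{j=0}^{k-2}(j+1)\,q_i^{k-2-j}\bar q^{j},
\]
which is SoS of degree $2k$ because $q_i$ and $\bar q$ are SoS and $(q_i-\bar q)^2$ is a square. The continuous version follows by the same identity with integrals in place of sums, and the integral of SoS polynomials is again SoS by \Cref{lem:averaging}.

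\textbf{Step 4 (conclude).} Taking $\EW$ of the combined certificate from Steps 2 and 3 preserves SoS, again by averaging. Combining with \Cref{lem:potential} gives
\[
 \EW[(v^\top B_T^{-1}v)^k]\le T^{-k}\,\tfrac1T\int_0^T G_t(v)\dd t\le T^{-k}\,\tfrac1T\int_0^T e^{\beta t}\dd t\,\|v\|_2^{2k}\le T^{-k}e^{\beta T}\|v\|_2^{2k}.
\]
Each step is a degree-$2k$ SoS inequality, which proves \eqref{eq:precision}.
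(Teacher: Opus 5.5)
Your proposal is correct and follows the paper's proof: the pointwise bound $B_T^{-1}\preceq T^{-2}\int_0^T A_t\dd t$, then raising to the $k$th power, then Jensen in time, then \Cref{lem:potential}, ending with the same estimate $T^{-k-1}\int_0^T e^{\beta t}\dd t\,\|v\|_2^{2k}\le T^{-k}e^{\beta T}\|v\|_2^{2k}$. The differences are cosmetic. The paper certifies the L\"owner inequality via the explicit identity $\int_0^T v^\top A_tv\dd t-T^2v^\top B_T^{-1}v=\int_0^T\|A_t^{1/2}v-TA_t^{-1/2}B_T^{-1}v\|_2^2\dd t$, which is equivalent to your Schur-complement computation, and it invokes its Jensen fact (averaged Young) where you give a direct identity; your factorization $b^k-a^k=(b-a)\sum_j b^ja^{k-1-j}$ also spells out the monotonicity step that the paper leaves implicit.
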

\begin{proof}
We first bound $v^\top B_T^{-1}v$ by
$T^{-2}\int_0^T v^\top A_tv\dd t$ for each realization. Indeed, the
identity $B_T=\int_0^T A_t^{-1}\dd t$ gives
\begin{equation}
 \int_0^T v^\top A_tv\dd t-T^2v^\top B_T^{-1}v
 =\int_0^T
   \left\|A_t^{1/2}v-TA_t^{-1/2}B_T^{-1}v\right\|_2^2\dd t
 \ge0.
 \label{eq:precision-square}
\end{equation}
By Jensen's inequality and \Cref{lem:potential}, it follows that
\begin{align}
 \EW[(v^\top B_T^{-1}v)^k]
 &\le T^{-2k}\EW\left[\left(\int_0^T v^\top A_tv\dd t\right)^k\right]
 \nonumber\\*
 &\le T^{-k-1}\int_0^T G_t(v)\dd t\nonumber\\*
 &\le T^{-k-1}\int_0^T e^{\beta t}\dd t\,\|v\|_2^{2k}\nonumber\\*
 &\le T^{-k}e^{\beta T}\|v\|_2^{2k}.
 \label{eq:precision-comparison}
\end{align}
Together with the quadratic certificate in~\eqref{eq:precision-square},
\Cref{fact:jensen,lem:averaging} show that these inequalities have a
degree-$2k$ certificate.
\end{proof}

\subsection{Averaging the Certificates}

\begin{proof}[Proof of \Cref{thm:transfer}]
Let $p_t$ be the process of \Cref{def:controlled-localization} with
$C_t=A_t^{-1}$, and fix $T>0$.
For each realization, $B_T^{1/2}(X-\mu_T)$ under $p_T$ is
$1$-strongly log-concave by \Cref{lem:density}, and therefore
$(L_k,2k)$-certifiably bounded.
Its moment certificate, evaluated at $u=B_T^{-1/2}v$, gives
\begin{align}
 \E_{p_T}\langle X-\mu_T,v\rangle^{2k}
 \le L_k^{2k}(v^\top B_T^{-1}v)^k
 \label{eq:transfer-centered-moments}
\end{align}
with a degree-$2k$ certificate. By \Cref{lem:martingale} and the
decomposition $X=(X-\mu_T)+\mu_T$, we have
\begin{align}
 \E_{p_0}\langle X,v\rangle^{2k}
 &=\EW[\E_{p_T}\langle X,v\rangle^{2k}]\nonumber\\
 &\le2^{2k-1}\EW\left[
    \E_{p_T}\langle X-\mu_T,v\rangle^{2k}
       +\langle\mu_T,v\rangle^{2k}\right]\nonumber\\
 &\le2^{2k-1}\left(
    L_k^{2k}\EW[(v^\top B_T^{-1}v)^k]
       +\EW\langle\mu_T,v\rangle^{2k}\right)\nonumber\\
 &\le2^{2k-1}e^{\beta T}
       \left(L_k^{2k}T^{-k}+1\right)\|v\|_2^{2k}.
 \label{eq:transfer-average}
\end{align}
Here the first inequality is the even-power triangle inequality, the
second is~\eqref{eq:transfer-centered-moments}, and the last uses
\Cref{lem:means,lem:precision}. Together they give the degree-$2k$
certificate~\eqref{eq:transfer}.

Finally, choose $T=\frac{1}{4(k-1)}$, which minimizes $e^{\beta T}T^{-k}$.
Since $L_k\ge1$, we have
\begin{align*}
 2^{2k-1}e^k\bigl([4(k-1)]^kL_k^{2k}+1\bigr)
 &\le[16e(k-1)]^kL_k^{2k}\\*
 &\le\bigl(4\sqrt{ek}\,L_k\bigr)^{2k},
\end{align*}
which proves the last assertion.
\end{proof}

%
\section{Certifiable Moments of Log-Concave Distributions}
\label{sec:main-proof}

\ThmMain*
\begin{proof}
Let $X\sim P$ and fix an even $m\ge4$; the case $m=2$ follows from isotropy.
By \Cref{lem:truncation}, there are compactly supported isotropic
log-concave vectors $Y_R$ whose moments converge to those of $X$.
Consequently, for any fixed $C$,
\begin{align*}
 (Cm)^m\|v\|_2^m-\E\langle Y_R,v\rangle^m
 &\longrightarrow
 (Cm)^m\|v\|_2^m-\E\langle X,v\rangle^m
\end{align*}
coefficientwise. Since the degree-$m$ sum-of-squares cone is closed
(\Cref{lem:averaging}), it suffices to prove the theorem for compactly
supported distributions, with a constant independent of the support.

Assume now that $P$ is compactly supported and write $m=2k$.
By \Cref{cor:strong-moments}, every $1$-strongly log-concave distribution
is $(\sqrt{Kk},2k)$-certifiably bounded, where $K$ is universal.
Taking $L_k=\sqrt{Kk}$ in \Cref{thm:transfer}, we obtain
\begin{align*}
 \sststile{m}{v}\quad \E\langle X,v\rangle^m
 &\le \bigl(4\sqrt{ek}\,\sqrt{Kk}\bigr)^{2k}\|v\|_2^m\\*
 &=\bigl(2\sqrt{eK}\,m\bigr)^m\|v\|_2^m.
\end{align*}
Thus $C=2\sqrt{eK}$ gives the required degree-$m$ certificate.
\end{proof}

%
\section*{Acknowledgments}
We thank Pravesh K.~Kothari, \c{S}tefan Tudose, and Santosh S.~Vempala
for the many helpful discussions.

\paragraph{AI Disclosure}
The author began working on this problem in February 2026.
Our original proof was obtained by adapting Guan's argument~\cite{Gua24},
replacing the spectral potential with moments of the directional variances
and using Letwin's bound on the third-moment tensor~\cite{Let26}
to establish dimension-free bounds on their growth within sum of squares.
Recently, we found that GPT-6 Pro could produce a proof using
the same input from Letwin and a similar analysis
\href{https://chatgpt.com/s/t_6aad080ac6688191907e800813f5b637}{in response to a single prompt}.
Following this, the author refined the original proof using a simpler
choice of localization control suggested by GPT-6 Pro, obtaining the
unified argument presented here.
GPT-6 was also used to draft and edit initial versions of portions
of this paper, which were then heavily edited and rewritten by the author.%
\printbibliography
\appendix
\section{Sum-of-Squares Toolkit}
\label[appendix]{app:sos}

We record the polynomial inequalities and closure properties used to average moment certificates. All certificates are unconstrained unless stated otherwise.

\subsection{Basic Sum-of-Squares Facts}

\begin{fact}[Spectral Certificates]
\label{fact:spectral}
Let $M\preceq N$ be real symmetric matrices and let $z(v)$ be a vector of polynomials of degree at most $r$. Then
\[
 \sos{2r}{v}{z(v)^\top Mz(v)\le z(v)^\top Nz(v)}.
\]
\end{fact}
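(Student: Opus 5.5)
The plan is to write $N-M=\sum_j \lambda_j w_jw_j^\top$ using the spectral theorem, with $\lambda_j\ge0$ and orthonormal eigenvectors $w_j$. Since $M\preceq N$, the matrix $N-M$ is positive semidefinite, so every eigenvalue $\lambda_j$ is nonnegative. Substituting the vector of polynomials $z(v)$ then gives the identity
\[
 z(v)^\top Nz(v)-z(v)^\top Mz(v)=\sum_j\bigl(\sqrt{\lambda_j}\,\langle w_j,z(v)\rangle\bigr)^2 .
\]

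Each term $\sqrt{\lambda_j}\,\langle w_j,z(v)\rangle$ is a real linear combination of the entries of $z(v)$. It is therefore a polynomial of degree at most $r$, and its square has degree at most $2r$. The right-hand side is thus a sum of squares of polynomials of degree at most $r$. By the definition in \Cref{sec:sos}, with empty hypothesis system, this is exactly a degree-$2r$ certificate for $z(v)^\top Mz(v)\le z(v)^\top Nz(v)$.

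An equivalent route is to factor $N-M=LL^\top$ (for example $L=(N-M)^{1/2}$). Then the difference equals $\|L^\top z(v)\|_2^2$, a sum of squares of the entries of $L^\top z(v)$.

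No step is a real obstacle. The only point to check is the degree bookkeeping: linear combinations of the entries of $z(v)$ do not raise the degree above $r$, so no summand exceeds degree $2r$.
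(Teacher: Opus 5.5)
Your proof is correct: writing $N-M=\sum_j\lambda_j w_jw_j^\top$ with $\lambda_j\ge0$ and substituting $z(v)$ gives an explicit sum of squares of polynomials of degree at most $r$, which is exactly what the degree-$2r$ certificate requires. The paper states this fact without proof as standard, and your spectral argument is the same reasoning it uses in \Cref{lem:fourth} to pass from the nonnegative quadratic form $8\|M\|_{\HS}^2-\Var(Y^\top MY)$ to a sum of squares of linear forms.
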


\begin{fact}[Young's Inequality]
\label{fact:young}
Let $a(v),b(v)$ be nonnegative quadratic forms and let $k\ge2$ be an integer. Then
\[
 \sos{2k}{v}{ka^{k-1}b\le(k-1)a^k+b^k}.
\]
\end{fact}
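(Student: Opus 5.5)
We need to prove Young's inequality: $k a^{k-1} b \le (k-1)a^k + b^k$ as a degree-$2k$ SoS certificate, for nonnegative quadratic forms $a,b$.

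The plan is to first write $a=\sum_i \ell_i(v)^2$ and $b=\sum_j m_j(v)^2$ as sums of squares of linear forms, which is possible by the spectral theorem since both are PSD quadratic forms. Consequently every product $a^r b^s$ is a sum of squares of degree $2(r+s)$. The goal is then to exhibit the one-variable AM–GM identity with coefficients that are nonnegative combinations of such products.

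The key step is the polynomial identity in two commuting variables:
\[
(k-1)x^k + y^k - k x^{k-1} y = (x-y)^2 \sum_{j=0}^{k-2} (j+1)\, x^{j} y^{k-2-j}.
\]
I would verify it by noting that the left side vanishes together with its derivative in $y$ at $y=x$. Equivalently, write $(k-1)x^k+y^k-kx^{k-1}y=(y-x)\bigl(\sum_{i=0}^{k-1}x^{k-1-i}y^i - kx^{k-1}\bigr)$, then factor $(y-x)$ once more from $\sum_{i}(y^i-x^i)x^{k-1-i}$; collecting coefficients gives the weights $j+1$ on $x^jy^{k-2-j}$, checked by a routine telescoping computation.

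Substituting $x=a(v)$ and $y=b(v)$ then gives
\[
(k-1)a^k+b^k-ka^{k-1}b=(a-b)^2\sum_{j=0}^{k-2}(j+1)\,a^j b^{k-2-j}.
\]
Each summand is a square $(a-b)^2$ of a degree-$2$ polynomial, times a product of the SoS forms $a^j b^{k-2-j}$ of degree $2(k-2)$, with a positive coefficient. Each summand is therefore a sum of squares of degree $4+2(k-2)=2k$, which is exactly the claimed degree-$2k$ certificate.

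The only real obstacle is getting the coefficients of the factorization right, in particular that they are all nonnegative. The double-root structure at $x=y$ and positivity of $x^{k-1}$ for $x>0$ guarantee this, and the explicit telescoping confirms the weights $j+1$.
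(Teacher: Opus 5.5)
Your proposal is correct. The identity $(k-1)x^k+y^k-kx^{k-1}y=(x-y)^2\sum_{j=0}^{k-2}(j+1)x^jy^{k-2-j}$ holds: in your telescoping, the monomial $x^{k-2-l}y^l$ appears exactly $k-1-l$ times. After substituting $x=a$ and $y=b$, each summand is the square $(a-b)^2$ times the sum of squares $a^jb^{k-2-j}$, so it is a sum of squares of polynomials of degree at most $k$. This gives a valid degree-$2k$ certificate.

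The paper records this as a standard fact without proof, so your explicit factorization supplies the certificate it omits. One caveat: nonnegativity of the weights is established by the explicit coefficient count. Your closing appeal to the double root at $x=y$ and the positivity of $x^{k-1}$ does not establish it, since that alone would not rule out negative coefficients in the quotient.
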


\begin{fact}[Jensen's Inequality]
\label{fact:jensen}
Let $k\ge1$ be an integer and let $a_\omega(v)$ be a measurable family of nonnegative quadratic forms on a probability space. If the coefficients of $a_\omega^k$ are integrable, then
\[
 \sos{2k}{v}{(\E_\omega a_\omega(v))^k
                     \le\E_\omega[a_\omega(v)^k]}.
\]
\end{fact}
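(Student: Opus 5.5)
The plan is to reduce the statement to a pointwise polynomial identity in which every term is visibly a sum of squares, and then average over $\omega$. Write $\bar a(v)=\E_\omega a_\omega(v)$. This is again a nonnegative quadratic form: if $a_\omega(v)=v^\top M_\omega v$ with $M_\omega\succeq0$, then $\bar a(v)=v^\top\bar M v$ with $\bar M=\E M_\omega\succeq0$. Here $\bar M$ is well defined because integrability of the coefficients of $a_\omega^k$ controls the entries of $M_\omega$. Since $\E_\omega[a_\omega-\bar a]=0$ coefficientwise, we have
\[
 \E_\omega[a_\omega^k]-\bar a^k
 =\E_\omega\Bigl[a_\omega^k-\bar a^k-k\bar a^{k-1}(a_\omega-\bar a)\Bigr].
\]
It therefore suffices to show two things: each bracketed polynomial is a sum of squares of degree $2k$, and the average of such a family is again one.

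For the pointwise step, use the one-variable identity, valid for integers $k\ge1$,
\[
 x^k-y^k-ky^{k-1}(x-y)=(x-y)^2\sum_{j=0}^{k-2}(j+1)\,x^{k-2-j}y^{j},
\]
where the empty sum for $k=1$ gives the trivial case. It can be checked by induction on $k$, or by writing $x^k-y^k=(x-y)\sum_i x^{k-1-i}y^i$ and subtracting $ky^{k-1}(x-y)$. Substitute $x=a_\omega(v)$ and $y=\bar a(v)$. Each nonnegative quadratic form is a sum of squares of linear forms, by diagonalizing $M_\omega$ and $\bar M$. A product of sums of squares is a sum of squares, since $(\sum_i f_i^2)(\sum_j g_j^2)=\sum_{i,j}(f_ig_j)^2$. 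Hence every term $(a_\omega-\bar a)^2a_\omega^{k-2-j}\bar a^{j}$ is a sum of squares of degree $4+2(k-2)=2k$, and the coefficients $j+1$ are positive.

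The step I expect to need the most care is the averaging, which requires a measurability and integrability argument. I would fix the vector $z(v)$ of all monomials of degree exactly $k$ and write each term as $z(v)^\top Q_\omega z(v)$ with $Q_\omega\succeq0$. The matrix $Q_\omega$ is built explicitly, for instance as a symmetrized tensor product of copies of $M_\omega$, $\bar M$ and the rank-one factor coming from $(M_\omega-\bar M)$. This makes $Q_\omega$ depend polynomially, hence measurably, on the entries of $M_\omega$.

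Integrability of $Q_\omega$ should follow from integrability of the coefficients of $a_\omega^k$. Each entry of $Q_\omega$ is a product of at most $k$ entries of $M_\omega$ and $\bar M$. These products are dominated by $\trace(M_\omega)^k$ up to constants, and $\trace(M_\omega)^k$ is a fixed linear combination of the coefficients of $a_\omega^k$; concretely, it is proportional to the spherical average of $a_\omega(v)^k$. Then $\E Q_\omega\succeq0$ gives the desired degree-$2k$ certificate $z(v)^\top(\E Q_\omega)z(v)$.

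If building $Q_\omega$ explicitly becomes cumbersome, I would instead invoke the closedness and averaging property of the degree-$2k$ sum-of-squares cone (\Cref{lem:averaging}). That property says an integrable family of degree-$2k$ sums of squares averages to a degree-$2k$ sum of squares, and it finishes the proof directly.
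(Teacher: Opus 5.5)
Your proof is correct and is essentially the paper's argument: your bracket $a_\omega^k-\bar a^k-k\bar a^{k-1}(a_\omega-\bar a)=a_\omega^k+(k-1)\bar a^k-k\bar a^{k-1}a_\omega$ is exactly the slack in Young's inequality (\Cref{fact:young}) with $a=\bar a$ and $b=a_\omega$, which the paper averages using \Cref{lem:averaging}, and your tangent-line identity supplies an explicit certificate for that Fact. One small fix: for $k,d\ge2$, $\trace(M_\omega)^k$ is neither proportional to the spherical average of $a_\omega^k$ nor a linear combination of its coefficients (e.g.\ $\E_{g\sim N(0,I_d)}a_\omega(g)^2=(\trace M_\omega)^2+2\trace(M_\omega^2)$), but the domination you need still holds, since $\trace(M_\omega)^k\le d^{k-1}\sum_i M_{\omega,ii}^k$ and $M_{\omega,ii}^k$ is the coefficient of $v_i^{2k}$ in $a_\omega^k$.
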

For $k\ge2$, this follows by averaging Young's inequality with
$a=\E_\omega a_\omega$ and $b=a_\omega$, using \Cref{lem:averaging};
the case $k=1$ is an identity.

\begin{fact}[Even-Power Triangle Inequality]
\label{fact:triangle}
Let $\ell_1(v),\ell_2(v)$ be linear forms. For every integer $k\ge1$,
\[
 \sos{2k}{v}{(\ell_1+\ell_2)^{2k}
             \le2^{2k-1}(\ell_1^{2k}+\ell_2^{2k})}.
\]
\end{fact}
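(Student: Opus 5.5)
We prove the even-power triangle inequality (\Cref{fact:triangle}) by reducing it to a univariate polynomial inequality in two variables and then substituting the linear forms. Treat $\ell_1,\ell_2$ as formal variables $a,b$. It suffices to show that
\[
 P(a,b)=2^{2k-1}(a^{2k}+b^{2k})-(a+b)^{2k}
\]
is a sum of squares of forms of degree $k$ in $(a,b)$. Substituting $a=\ell_1(v)$ and $b=\ell_2(v)$ then preserves the degree: each square of a degree-$k$ form in $(a,b)$ becomes the square of a degree-$k$ polynomial in $v$. This gives a degree-$2k$ certificate in $v$.

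The key point is that $P$ is a nonnegative binary form. By convexity of $x\mapsto x^{2k}$,
\[
 \Bigl(\frac{a+b}{2}\Bigr)^{2k}\le\frac{a^{2k}+b^{2k}}{2},
\]
which is exactly $P\ge0$. By Hilbert's theorem, every nonnegative binary form is a sum of squares. Equivalently, dehomogenize to $P(t,1)$, a nonnegative univariate polynomial. Its real roots have even multiplicity and its complex roots come in conjugate pairs, so it factors as $|g(t)|^2=(\operatorname{Re}g)^2+(\operatorname{Im}g)^2$ with $g\in\mathbb C[t]$ of degree at most $k$. Rehomogenizing gives two squares of degree-$k$ forms, and the top coefficient $2^{2k-1}-1>0$ keeps the degrees consistent. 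This step completes the proof.

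For a more explicit certificate, one can use the substitution $a=s+r$, $b=s-r$. Then $(a+b)^{2k}=2^{2k}s^{2k}$, and
\[
 a^{2k}+b^{2k}=2\sum_{j\text{ even}}\binom{2k}{j}s^{2k-j}r^{j}.
\]
Hence
\[
 P=2^{2k}\sum_{j\ge2,\ j\text{ even}}\binom{2k}{j}s^{2k-j}r^{j}.
\]
Every term has $s$ and $r$ both raised to even powers, so each is a square monomial $(s^{k-j/2}r^{j/2})^2$ of degree $2k$. Since $s=(a+b)/2$ and $r=(a-b)/2$ are linear, this is a sum of squares of degree-$k$ forms in $(a,b)$. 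The explicit route avoids any existence argument and is the version I would write.

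The only point needing care is the degree bookkeeping under the linear substitution, and this is routine. The expansion step is the one that would fail if mishandled, but the parity argument shows that all odd-$j$ cross terms cancel.
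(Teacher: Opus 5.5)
Your proof is correct and complete. The paper gives no argument to compare against: it lists this fact in its sum-of-squares toolkit without proof, as a standard fact, and your explicit certificate supplies one.

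With $s=(\ell_1+\ell_2)/2$ and $r=(\ell_1-\ell_2)/2$, your computation gives the identity
\[
 2^{2k-1}(\ell_1^{2k}+\ell_2^{2k})-(\ell_1+\ell_2)^{2k}
 =2^{2k}\sum_{\substack{2\le j\le 2k\\ j\text{ even}}}\binom{2k}{j}\bigl(s^{k-j/2}r^{j/2}\bigr)^2 .
\]
This is a nonnegative combination of squares of degree-$k$ polynomials in $v$, which is exactly a degree-$2k$ certificate in the paper's sense. The identity also shows that the constant $2^{2k-1}$ is sharp, since the right side vanishes when $\ell_1=\ell_2$.

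The paragraph using Hilbert's theorem and univariate factorization is also valid. It is redundant once you have the explicit identity, so you can drop it.
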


\subsection{Projection onto Linear Functions}
\label[appendix]{app:projection}

\LemProjection*
\begin{proof}
Put $b(u)=\E[Y\langle Y,u\rangle^2]$. Since $\E Y=0$ and
$\E YY^\top=I_d$, expanding the square gives
\begin{align*}
 &\E\left[\left(\langle Y,u\rangle^2-\|u\|_2^2
                    -\langle Y,b(u)\rangle\right)^2\right]\\
 &\qquad=\Var(\langle Y,u\rangle^2)
       -2\left\langle\E[Y(\langle Y,u\rangle^2-\|u\|_2^2)],b(u)\right\rangle
       +b(u)^\top\E[YY^\top]b(u)\\
 &\qquad=\Var(\langle Y,u\rangle^2)-\|b(u)\|_2^2.
\end{align*}
For each fixed value of $Y$, the integrand is the square of a quadratic
form in $u$. Its expectation is therefore a degree-four sum of squares,
which proves the claim.
\end{proof}

\subsection{Averaging and Limits}

\begin{lemma}[Closedness and Averaging]
\label{lem:averaging}
In a fixed number of variables, the cone of sum-of-squares polynomials of degree at most $2k$ is closed under coefficientwise limits. If $p_\omega(v)$ belongs to this cone for almost every $\omega$ and its coefficients are measurable and integrable with respect to a finite positive measure $\nu$, then
\[
 \int p_\omega(v)\dd\nu(\omega)
\]
also belongs to the cone.
\end{lemma}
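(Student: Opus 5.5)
The plan is to prove the two assertions separately, both by reducing to finite-dimensional convex geometry of the space of polynomials. Fix the monomial vector $z(v)$ of all monomials of degree at most $k$ in $v\in\R^d$, of dimension $N=\binom{d+k}{k}$. A polynomial $p$ of degree at most $2k$ is a sum of squares if and only if $p(v)=z(v)^\top Qz(v)$ for some $Q\succeq0$ (Gram matrix representation). Thus the cone $\Sigma_{2k}$ of such polynomials is the image of the PSD cone $\mathbb S^N_+$ under the linear map $\Phi(Q)=z^\top Qz$.

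For \textbf{closedness}, take SoS polynomials $p_n=\Phi(Q_n)\to p$ coefficientwise. The main obstacle is that the image of a closed cone under a linear map need not be closed, so I must show the Gram matrices can be chosen bounded. I would use the trace: $\trace Q_n$ is controlled by a fixed linear functional of $p_n$. Concretely, integrating against the standard Gaussian measure gives
\[
\E_{g}\,p_n(g)=\langle Q_n,\E[z(g)z(g)^\top]\rangle_{\HS}\ge\lambda_{\min}(\E zz^\top)\,\trace Q_n .
\]
The moment matrix $\E zz^\top$ is positive definite, because the monomials are linearly independent functions and the Gaussian has full support. Since $\E_g p_n(g)$ is a fixed linear function of the coefficients, it converges, so the $Q_n$ are bounded in trace and hence in norm. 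By compactness, a subsequence $Q_n\to Q\succeq0$, and continuity of $\Phi$ gives $p=\Phi(Q)\in\Sigma_{2k}$.

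For \textbf{averaging}, I would avoid measurable selection of Gram matrices and instead use duality. Because $\Sigma_{2k}$ is a closed convex cone in the finite-dimensional coefficient space, the bipolar theorem says $p\in\Sigma_{2k}$ if and only if $\ell(p)\ge0$ for every linear functional $\ell$ with $\ell\ge0$ on $\Sigma_{2k}$. Let $\bar p=\int p_\omega\dd\nu$, which is well defined coefficientwise by integrability. For any such $\ell$, linearity gives $\ell(\bar p)=\int\ell(p_\omega)\dd\nu(\omega)$. The integrand is a finite combination of integrable coefficients, so it is integrable, and it is nonnegative $\nu$-a.e. Hence $\ell(\bar p)\ge0$, and therefore $\bar p\in\Sigma_{2k}$.

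An alternative for the averaging step would be Riemann-type approximation by finite nonnegative combinations followed by closedness, but the duality argument is cleaner. The only delicate point in the whole lemma is the boundedness of the Gram matrices in the closedness step, which the Gaussian moment-matrix argument handles.
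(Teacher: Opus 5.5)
Your proposal is correct and follows essentially the same route as the paper. Closedness comes from bounding the Gram-matrix traces via the positive definite Gaussian moment matrix $\E[z(G)z(G)^\top]$, followed by compactness, and averaging follows by separation, testing against linear functionals that are nonnegative on the closed convex cone.
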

\begin{proof}
We first prove closedness. Suppose $p_j\to p$ coefficientwise, where each
$p_j$ is a sum of squares of degree at most $2k$. Let $z(v)$ be the vector
of monomials of degree at most $k$, and choose $Q_j\succeq0$ such that
$p_j(v)=z(v)^\top Q_jz(v)$.

For a standard Gaussian vector $G$, the matrix
$H=\E[z(G)z(G)^\top]$ is positive definite, since the monomials in $z$ are
linearly independent in $L_2(G)$. Consequently,
\[
 \lambda_{\min}(H)\trace Q_j
 \le\trace(HQ_j)
 =\E p_j(G)\longrightarrow\E p(G).
\]
The matrices $Q_j$ therefore have bounded trace and admit a convergent
subsequence, with limit $Q\succeq0$. It follows that $p(v)=z(v)^\top Qz(v)$.

The cone is convex as well as closed, so the averaging assertion follows
by separation. Indeed, for every linear functional $\ell$ nonnegative on
the cone, coefficientwise integrability gives
\[
 \ell\!\left(\int p_\omega\dd\nu(\omega)\right)
 =\int\ell(p_\omega)\dd\nu(\omega)\ge0.\qedhere
\]
\end{proof}

\subsection{Integrating Polynomial Differential Inequalities}

\begin{lemma}[Positive Integrating Factors]
\label{lem:integration}
Let $F_t(v)$ have degree at most $2k$ and coefficients that are absolutely continuous on $[0,T]$. Let $H_t(v)$ have degree at most $2k$ and integrable coefficients, and let $a\in\R$. If
\[
 \sos{2k}{v}{F_t'(v)\le aF_t(v)+H_t(v)}
\]
for almost every $t$, then
\[
 \sos{2k}{v}{F_T(v)\le e^{aT}F_0(v)
               +\int_0^T e^{a(T-t)}H_t(v)\dd t}.
\]
\end{lemma}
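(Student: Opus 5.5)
Multiplying by the positive integrating factor $e^{-at}$ turns the hypothesis into a statement about a nonincreasing-up-to-SoS quantity, which I then integrate using \Cref{lem:averaging}. Concretely, set $\Phi_t(v)=e^{-at}F_t(v)$. Since the coefficients of $F_t$ are absolutely continuous on $[0,T]$, so are those of $\Phi_t$. Coefficientwise, for almost every $t$,
\[
 \Phi_t'(v)=e^{-at}\bigl(F_t'(v)-aF_t(v)\bigr).
\]
By hypothesis, for almost every $t$ the polynomial
\[
 S_t(v)=aF_t(v)+H_t(v)-F_t'(v)
\]
is a sum of squares of degree at most $2k$. Multiplying by the positive scalar $e^{-at}$ preserves this, so $e^{-at}H_t(v)-\Phi_t'(v)=e^{-at}S_t(v)$ is a degree-$2k$ sum of squares for almost every $t$.

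Next I integrate over $[0,T]$. Absolute continuity gives, coefficientwise,
\[
 \Phi_T-\Phi_0=\int_0^T\Phi_t'\dd t.
\]
Hence
\[
 \int_0^T e^{-at}H_t(v)\dd t-\Phi_T(v)+\Phi_0(v)=\int_0^T e^{-at}S_t(v)\dd t.
\]
The integrand on the right is a degree-$2k$ sum of squares for almost every $t$. Its coefficients are measurable and integrable with respect to Lebesgue measure on $[0,T]$, which is finite. Indeed, $S_t$ is a combination of $F_t$, which is continuous and hence bounded, of $H_t$, which is integrable by assumption, and of $F_t'$, which is integrable by absolute continuity. \Cref{lem:averaging} therefore shows that the right-hand side lies in the degree-$2k$ SoS cone. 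Multiplying through by $e^{aT}>0$ gives
\[
 e^{aT}F_0(v)+\int_0^T e^{a(T-t)}H_t(v)\dd t-F_T(v)=\int_0^T e^{a(T-t)}S_t(v)\dd t,
\]
which is exactly the claimed certificate.

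The only delicate point is measurability. The SoS decomposition of $S_t$ need not be chosen measurably in $t$, but this does not matter. \Cref{lem:averaging} only requires that the coefficients of $S_t$ be measurable and integrable and that $S_t$ lie in the cone almost everywhere, and its proof proceeds by separation rather than by integrating the decompositions themselves. Every other step is an identity of polynomials with coefficientwise-integrable coefficients.
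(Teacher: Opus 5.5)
Your proposal is correct and follows essentially the same route as the paper. Both arguments rest on the integrating-factor identity $e^{aT}F_0+\int_0^T e^{a(T-t)}H_t\dd t-F_T=\int_0^T e^{a(T-t)}(aF_t+H_t-F_t')\dd t$ and then apply \Cref{lem:averaging} with the positive weight $e^{a(T-t)}$; your derivation through $\Phi_t=e^{-at}F_t$ and your check that the integrand's coefficients are integrable spell out details the paper leaves implicit.
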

\begin{proof}
The integrating-factor identity gives
\[
 e^{aT}F_0+\int_0^T e^{a(T-t)}H_t\dd t-F_T
 =\int_0^T e^{a(T-t)}(aF_t+H_t-F_t')\dd t.
\]
By assumption, $aF_t+H_t-F_t'$ is a sum of squares of degree at most $2k$ for almost every $t$. Averaging with the positive weight $e^{a(T-t)}$ proves the claim by \Cref{lem:averaging}.
\end{proof}

%
\section{Stochastic Localization}
\label[appendix]{app:localization}

We record the remaining properties of the process in
\Cref{def:controlled-localization} and prove
\Cref{fact:covariance-evolution}. Throughout, $C_t=A_t^{-1}$.

\subsection{Density Evolution and Averaging}
\label{app:density-evolution}

\begin{fact}[Density Evolution, {\cite[Lemma 4.4]{KV26}}]
\label{lem:density}
The density $p_t$ satisfies
\begin{equation}
 \dif p_t(x)=p_t(x)\langle x-\mu_t,A_t^{-1/2}\dif W_t\rangle.
 \label{eq:density-sde}
\end{equation}
For $t>0$, it is $B_t$-strongly log-concave.
\end{fact}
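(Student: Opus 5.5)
We need to prove Fact (Density Evolution): $p_t$ satisfies $\dif p_t(x)=p_t(x)\langle x-\mu_t,A_t^{-1/2}\dif W_t\rangle$, and for $t>0$ it is $B_t$-strongly log-concave.

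The plan is to compute $\dif p_t(x)$ directly from the explicit formula \eqref{eq:localized-density} by It\^o's formula. Write $p_t(x)=g_t(x)/Z_t$ with $g_t(x)=e^{c_t^\top x-\frac12x^\top B_tx}p_0(x)$ and $Z_t=\int g_t(y)\dd y$. Because $p_0$ is supported in the ball of radius $R$, every integrand is bounded by $e^{\|c_t\|R}$ times a bounded function. This lets us differentiate under the integral sign and interchange stochastic and Lebesgue integration (a stochastic Fubini argument), at least up to localizing stopping times for $\|c_t\|$.

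First, for fixed $x$, the exponent $\ell_t(x)=c_t^\top x-\frac12x^\top B_tx$ has differential
\[
\dif\ell_t=x^\top C_t\mu_t\dd t+x^\top C_t^{1/2}\dif W_t-\tfrac12x^\top C_tx\dd t
\]
and quadratic variation rate $x^\top C_tx$. It\^o's formula for the exponential then gives
\[
\dif g_t(x)=g_t(x)\bigl(x^\top C_t\mu_t\dd t+x^\top C_t^{1/2}\dif W_t\bigr),
\]
since the $-\frac12 x^\top C_t x$ drift cancels the It\^o correction. Integrating in $x$ yields
\[
\dif Z_t=Z_t\bigl(\mu_t^\top C_t\mu_t\dd t+\mu_t^\top C_t^{1/2}\dif W_t\bigr).
\]
Next apply It\^o to $1/Z_t$:
\[
\dif(Z_t^{-1})=Z_t^{-1}\bigl(-\mu_t^\top C_t\mu_t\dd t-\mu_t^\top C_t^{1/2}\dif W_t+\mu_t^\top C_t\mu_t\dd t\bigr)=-Z_t^{-1}\mu_t^\top C_t^{1/2}\dif W_t.
\]
Combining these with the product rule, the cross-variation term is $-p_t(x)\,x^\top C_t\mu_t\dd t$. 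This cancels the drift $p_t(x)\,x^\top C_t\mu_t\dd t$ from $\dif g_t$, and the remaining martingale terms combine to $p_t(x)(x-\mu_t)^\top C_t^{1/2}\dif W_t$. With $C_t=A_t^{-1}$ this is exactly \eqref{eq:density-sde}. The main care point is bookkeeping these drift cancellations; existence, uniqueness and $A_t\succ0$ are taken from \cite{Eld13}.

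For strong log-concavity, $-\log p_t(x)=V_0(x)-c_t^\top x+\frac12x^\top B_tx+\log Z_t$. Subtracting $\frac12x^\top B_tx$ leaves $V_0$ plus an affine function, which is convex. It remains to note that $B_t=\int_0^tA_s^{-1}\dd s\succ0$ for $t>0$, because $A_s^{-1}\succ0$ is continuous in $s$. This gives $B_t$-strong log-concavity. If $p_0$ lives on a lower-dimensional affine span, the same argument applies on that span.
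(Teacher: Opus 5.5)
Your proof is correct and follows essentially the same route as the paper: a direct It\^o computation from the explicit tilted formula~\eqref{eq:localized-density}, with strong log-concavity read off from $e^{\frac12x^\top B_tx}p_t(x)\propto e^{c_t^\top x}p_0(x)$. The only difference is bookkeeping: the paper applies It\^o's formula to $\log Z_t$ and $\log p_t(x)$ and then exponentiates, treating points with $p_0(x)=0$ separately, whereas you work multiplicatively with $g_t$, $Z_t^{-1}$ and the product rule, which handles that zero set automatically.
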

\begin{proof}
By~\eqref{eq:param-sde}, $B_t$ has finite variation. Applying It\^o's
formula to the logarithm of the normalizing integral
in~\eqref{eq:localized-density} gives
\begin{align}
 &\dif\log\int e^{c_t^\top y-\frac12y^\top B_ty}p_0(y)\dd y\nonumber\\*
 &\qquad=\langle\mu_t,\dif c_t\rangle
       -\frac12\E_{p_t}[X^\top A_t^{-1}X]\dd t
       +\frac12\trace(A_tA_t^{-1})\dd t\nonumber\\*
 &\qquad=\langle\mu_t,\dif c_t\rangle
       -\frac12\mu_t^\top A_t^{-1}\mu_t\dd t,\nonumber
\end{align}
where the second equality follows from the definition of the covariance.
For $x$ with $p_0(x)>0$, the density formula therefore gives
\begin{align}
 \dif\log p_t(x)
 &=\langle x-\mu_t,\dif c_t\rangle
   -\frac12\bigl(x^\top A_t^{-1}x-\mu_t^\top A_t^{-1}\mu_t\bigr)\dd t\nonumber\\*
 &=\langle x-\mu_t,A_t^{-1/2}\dif W_t\rangle
   -\frac12\|A_t^{-1/2}(x-\mu_t)\|_2^2\dd t.
 \label{eq:log-density-sde}
\end{align}
Applying It\^o's formula to the exponential gives
\begin{align}
 \dif p_t(x)
 &=p_t(x)\dif\log p_t(x)
   +\frac12p_t(x)\|A_t^{-1/2}(x-\mu_t)\|_2^2\dd t\nonumber\\*
 &=p_t(x)\langle x-\mu_t,A_t^{-1/2}\dif W_t\rangle.\nonumber
\end{align}
If $p_0(x)=0$, then $p_t(x)=0$ for all $t$, so the identity still holds.
Finally, the density formula gives
\begin{align}
 e^{\frac12x^\top B_tx}p_t(x)&\propto e^{c_t^\top x}p_0(x).\nonumber
\end{align}
The right-hand side is log-concave, so $p_t$ is $B_t$-strongly log-concave.
\end{proof}

The density equation also gives the averaging identity used in the
proof of \Cref{thm:transfer}.

\begin{fact}[Preservation of the Expected Measure]
\label{lem:martingale}
For every measurable $f$ bounded on the support of $p_0$, the process
$\E_{p_t}f$ is a square-integrable martingale. In particular,
\begin{equation}
 \EW[\E_{p_t}f]=\E_{p_0}f.
 \label{eq:measure-average}
\end{equation}
\end{fact}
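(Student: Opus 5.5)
The plan is to read the martingale property directly off the density equation \eqref{eq:density-sde} in \Cref{lem:density}. Fix a measurable $f$ with $|f|\le M$ on the support of $p_0$, which is contained in $\{\|x\|_2\le R\}$. Integrating \eqref{eq:density-sde} against $f$ and exchanging the stochastic differential with the $x$-integral (a stochastic Fubini step), we will obtain
\begin{equation*}
 \dif\,\E_{p_t}f=\bigl\langle A_t^{-1/2}\E_{p_t}[f(X)(X-\mu_t)],\dif W_t\bigr\rangle .
\end{equation*}
There is no drift term, so $\E_{p_t}f$ is a local martingale. The identity \eqref{eq:measure-average} will then follow by taking expectations and using $p_t|_{t=0}=p_0$, provided we upgrade this to a genuine square-integrable martingale.

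For square-integrability, the direct route is to note that $|\E_{p_t}f|\le M$ for all $t$, because $p_t$ is a probability density on the support of $p_0$. A bounded local martingale is a true martingale, and it is trivially square-integrable.

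To be safe about the Fubini exchange, I will also check that the integrand has finite quadratic variation on $[0,t]$. Its rate is $\|A_s^{-1/2}\E_{p_s}[f(X)(X-\mu_s)]\|_2^2$. By Cauchy--Schwarz in $L_2(p_s)$ this is at most
\begin{equation*}
 M^2\,\E_{p_s}\bigl\|A_s^{-1/2}(X-\mu_s)\bigr\|_2^2=M^2 d .
\end{equation*}
This bound holds uniformly, with no need to control $A_s^{-1}$ separately. Hence $\EW\int_0^t(\cdot)\,\dif s\le M^2dt<\infty$, so the stochastic integral is an $L_2$ martingale, which gives the claim directly.

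The main obstacle is justifying the interchange of $\dif$ and $\int\cdot\,\dif x$. I would handle it by writing $p_t(x)=p_0(x)\,e^{c_t^\top x-\frac12 x^\top B_tx}/Z_t$ and applying It\^o's formula to the finite-dimensional semimartingale $(c_t,B_t)$. Concretely, $\E_{p_t}f=N_t/Z_t$, where $N_t=\int f\,e^{c_t^\top x-\frac12x^\top B_tx}p_0\dd x$ and $Z_t$ is the same integral with $f\equiv 1$. Both $N_t$ and $Z_t$ are smooth functions of $(c_t,B_t)$, since differentiation under the integral is valid on the compact support. Computing $\dif(N_t/Z_t)$ as in the proof of \Cref{lem:density} recovers the displayed equation without any stochastic Fubini theorem.
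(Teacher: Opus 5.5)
Your proposal is correct and follows the paper's proof: you integrate the density equation against $f$ to get a driftless stochastic integral, conclude that $\E_{p_t}f$ is a local martingale, and upgrade it to a square-integrable martingale via $|\E_{p_t}f|\le M$. The one difference is in how you justify the interchange: you apply It\^o's formula to the smooth map $(c,B)\mapsto N/Z$ of the finite-dimensional semimartingale $(c_t,B_t)$, whereas the paper invokes stochastic Fubini after stopping; your route is a clean self-contained alternative, and your $L_2$ check of the integrand is redundant but harmless (it even gives $M^2$ in place of $M^2d$, since $\|\E[fY]\|_2\le(\E f^2)^{1/2}$ for isotropic $Y$).
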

\begin{proof}
Integrating~\eqref{eq:density-sde} against $f$ gives
\begin{equation}
 \dif\E_{p_t}f
 =\left\langle A_t^{-1/2}\E_{p_t}[(X-\mu_t)f(X)],\dif W_t\right\rangle.
 \label{eq:test-sde}
\end{equation}
Stochastic Fubini applies after stopping where $c_t$, $B_t$, and $A_t^{-1}$
are bounded. Thus $\E_{p_t}f$ is a local martingale, and the bound
$|\E_{p_t}f|\le\|f\|_{L_\infty(p_0)}$ makes it a square-integrable
martingale.
\end{proof}

\subsection{Mean and Covariance Equations}
\label{app:moment-equations}

\begin{fact}[Mean Evolution]
\label{fact:mean-evolution}
The mean satisfies
\begin{align}
 \dif\mu_t&=A_t^{1/2}\dif W_t.
 \label{eq:mean-sde}
\end{align}
\end{fact}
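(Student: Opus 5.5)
We integrate the density equation \eqref{eq:density-sde} of \Cref{lem:density} against the coordinate functions, exactly as in the proof of \Cref{lem:martingale}. Fix $i\in[d]$ and take $f(x)=x_i$. Since $p_0$ is supported in the ball of radius $R$, $f$ is bounded on the support, so \eqref{eq:test-sde} applies and gives
\begin{align*}
 \dif(\mu_t)_i
 =\left\langle A_t^{-1/2}\E_{p_t}[(X-\mu_t)X_i],\dif W_t\right\rangle.
\end{align*}
The main step is to identify the vector $\E_{p_t}[(X-\mu_t)X_i]$. Because $\E_{p_t}[X-\mu_t]=0$, we may replace $X_i$ by $X_i-(\mu_t)_i$ at no cost. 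The vector then becomes the $i$th column of $\E_{p_t}[(X-\mu_t)(X-\mu_t)^\top]=A_t$, that is, $A_te_i$.

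Substituting this back gives
\begin{align*}
 \dif(\mu_t)_i=\langle A_t^{-1/2}A_te_i,\dif W_t\rangle
 =\langle A_t^{1/2}e_i,\dif W_t\rangle
 =(A_t^{1/2}\dif W_t)_i,
\end{align*}
where the last equality uses the symmetry of $A_t^{1/2}$. Stacking the coordinates gives \eqref{eq:mean-sde}.

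The only delicate point is justifying that we may differentiate under the integral. This means interchanging the stochastic differential with the $x$-integral via stochastic Fubini. I would handle it exactly as in \Cref{lem:martingale}: localize with stopping times on which $c_t$, $B_t$ and $A_t^{-1}$ are bounded, so that the integrands are uniformly bounded on the compact support. The identity then holds up to each stopping time, and we let the stopping times increase to infinity. The stopping times do tend to infinity because the process exists for all finite times with $A_t\succ0$, as cited from \cite{Eld13}. No drift term appears, because \eqref{eq:density-sde} is itself drift-free.
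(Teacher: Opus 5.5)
Your proposal is correct and is essentially the paper's proof: apply the test-function equation \eqref{eq:test-sde} to the coordinate functions, use $\E_{p_t}(X-\mu_t)=0$ to replace $\E_{p_t}[X(X-\mu_t)^\top]$ by $A_t$, and conclude $\dif\mu_t=A_tA_t^{-1/2}\dif W_t=A_t^{1/2}\dif W_t$. Your stopping-time justification of stochastic Fubini is the same localization the paper uses in \Cref{lem:martingale}.
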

\begin{proof}
Apply~\eqref{eq:test-sde} to the coordinate functions. Since
$\E_{p_t}(X-\mu_t)=0$, we obtain
\begin{align}
 \dif\mu_t
 &=\E_{p_t}[X(X-\mu_t)^\top]A_t^{-1/2}\dif W_t\nonumber\\*
 &=\E_{p_t}[(X-\mu_t)(X-\mu_t)^\top]A_t^{-1/2}\dif W_t\nonumber\\*
 &=A_t^{1/2}\dif W_t.\nonumber\qedhere
\end{align}
\end{proof}

Taking the quadratic variation of~\eqref{eq:mean-sde} in a fixed direction
gives the following corollary.
\begin{corollary}[Quadratic Variation of the Mean]
\label{cor:mean-variation}
For every fixed $v\in\R^d$,
\begin{align}
 \frac{\dif}{\dif t}[\langle\mu_{\cdot},v\rangle]_t&=v^\top A_tv.
 \label{eq:mean-variation}
\end{align}
\end{corollary}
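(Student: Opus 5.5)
The statement is \Cref{cor:mean-variation}: for fixed $v$, $\frac{\dif}{\dif t}[\langle\mu_\cdot,v\rangle]_t=v^\top A_tv$. The plan is to read the scalar SDE for $\langle\mu_t,v\rangle$ off \Cref{fact:mean-evolution} and then compute its quadratic variation with the It\^o isometry for stochastic integrals against $d$-dimensional Brownian motion.

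First, pair \eqref{eq:mean-sde} with the fixed vector $v$. Since $v$ is deterministic and constant in time, this gives
\[
 \dif\langle\mu_t,v\rangle=\langle v,A_t^{1/2}\dif W_t\rangle=\langle A_t^{1/2}v,\dif W_t\rangle=\sum_{i=1}^d (A_t^{1/2}v)_i\,\dif W_t^{(i)}.
\]
Here we use that $A_t^{1/2}$ is symmetric. Thus $\langle\mu_t,v\rangle$ is a continuous local martingale with no drift term, driven by the adapted integrand $\sigma_t=A_t^{1/2}v$. This integrand is continuous because $A_t$ is continuous and positive definite.

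Second, apply the rule $[W^{(i)},W^{(j)}]_t=\delta_{ij}t$. It gives
\[
 [\langle\mu_\cdot,v\rangle]_t=\int_0^t\|\sigma_s\|_2^2\dd s=\int_0^t v^\top A_s^{1/2}A_s^{1/2}v\dd s=\int_0^t v^\top A_sv\dd s.
\]
The integrand is continuous in $s$, so differentiating in $t$ yields \eqref{eq:mean-variation} for every $t$, and not merely almost every $t$.

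There is essentially no obstacle. The only point to check is that the quadratic-variation formula applies, which requires $\int_0^t\|\sigma_s\|^2\dd s<\infty$ almost surely. This holds because $v^\top A_sv\le R^2\|v\|_2^2$ under the bounded-support assumption, and also simply because the integrand is continuous on compact time intervals.
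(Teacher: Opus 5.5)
Your proposal is correct and matches the paper's argument: pair the mean equation $\dif\mu_t=A_t^{1/2}\dif W_t$ with the fixed vector $v$, then read off the quadratic variation $\int_0^t\|A_s^{1/2}v\|_2^2\dd s=\int_0^t v^\top A_sv\dd s$. Your extra remarks, that the integrand is continuous (so the derivative exists for every $t$) and that $\int_0^t\|\sigma_s\|_2^2\dd s<\infty$, are fine supplementary details that the paper leaves implicit.
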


\FactCovariance*
\begin{proof}
Apply It\^o's product rule to
$A_t=\E_{p_t}[XX^\top]-\mu_t\mu_t^\top$.
Using \Cref{fact:mean-evolution}, we obtain
\begin{align}
 \dif A_t
 &=\int xx^\top\dif p_t(x)\dd x
      -(\dif\mu_t)\mu_t^\top-\mu_t(\dif\mu_t)^\top-A_t\dd t\nonumber\\*
 &=\int (x-\mu_t)(x-\mu_t)^\top
      \langle x-\mu_t,A_t^{-1/2}\dif W_t\rangle p_t(x)\dd x-A_t\dd t\nonumber\\*
 &=\sum_{i=1}^d H_{t,i}(A_t^{-1/2}\dif W_t)_i-A_t\dd t.\nonumber
\end{align}
The second equality follows by centering and~\eqref{eq:density-sde};
the third uses~\eqref{eq:third-slices}.
\end{proof}

Finally, every $p_t$ has the same bounded support as $p_0$, so
\begin{align}
 \|\mu_t\|_2\le R,\qquad A_t\preceq R^2I_d,\qquad
 B_T^{-1}\preceq\frac{R^2}{T}I_d\quad(T>0).\nonumber
\end{align}
For fixed $k$ and $v$, these estimates and \Cref{lem:fluctuations} bound
the drift and Brownian coefficients in the moment equations of
\Cref{sec:transfer}. The stochastic integrals on finite intervals are
therefore square-integrable with mean zero, and expectation commutes with
time integration coefficientwise.

\subsection{Truncation and Affine Normalization}

\begin{lemma}[Compactly Supported Approximation]
\label{lem:truncation}
Let $X$ be an isotropic log-concave random vector in $\R^d$. There are
compactly supported isotropic log-concave random vectors $Y_R$ such that
\[
 \E f(Y_R)\longrightarrow\E f(X)\qquad(R\to\infty)
\]
for every polynomial $f:\R^d\to\R$.
\end{lemma}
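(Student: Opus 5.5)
The plan is to truncate $X$ to a large ball and then restore isotropy by an affine map. For $R>0$, let $X_R$ be $X$ conditioned on the event $\{\|X\|_2\le R\}$. Its density is proportional to $p\,\mathbbm 1_{\{\|x\|_2\le R\}}$. This is the product of a log-concave function and the indicator of a convex set, so it is log-concave, and it is compactly supported. Let $\mu_R=\E X_R$ and $\Sigma_R=\cov(X_R)$, and set
\[
 Y_R=\Sigma_R^{-1/2}(X_R-\mu_R).
\]
Affine maps preserve log-concavity and compact support, and $Y_R$ is isotropic by construction. It remains to check two things: that $\Sigma_R$ is invertible, and that the moments of $Y_R$ converge to those of $X$.

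\textbf{Moment convergence of $X_R$.} Since $X$ is log-concave with identity covariance, its density has full-dimensional support. We have $\P(\|X\|_2\le R)\to1$. Every polynomial moment of $X$ is finite, because isotropic log-concave vectors have exponential tails (for instance, by \eqref{eq:ordinary-moments} applied to coordinates). For a monomial $f$,
\[
 \E f(X_R)=\frac{\E[f(X)\mathbbm 1_{\{\|X\|_2\le R\}}]}{\P(\|X\|_2\le R)},
\]
and by dominated convergence the numerator tends to $\E f(X)$. Hence $\E f(X_R)\to\E f(X)$ for every polynomial $f$. In particular, $\mu_R\to0$ and $\Sigma_R\to I_d$.

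\textbf{Invertibility and passing to $Y_R$.} Since $\Sigma_R\to I_d$, the matrix $\Sigma_R$ is positive definite for all large $R$. Alternatively, it is positive definite for every $R$ large enough that the ball meets the full-dimensional support in a set of positive measure. Then $\Sigma_R^{-1/2}\to I_d$ by continuity of the inverse square root on positive definite matrices. For a polynomial $f$ of degree $r$, write $g_R(x)=f(\Sigma_R^{-1/2}(x-\mu_R))$. Expanded in monomials, $g_R$ has coefficients that are polynomials in the entries of $\Sigma_R^{-1/2}$ and $\mu_R$, and these converge to the coefficients of $f$. Each monomial moment $\E[x^\alpha(X_R)]$ with $|\alpha|\le r$ converges to the corresponding moment of $X$. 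So $\E f(Y_R)=\E g_R(X_R)\to\E f(X)$, which is a finite sum of products of convergent sequences.

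\textbf{Main obstacle.} The only step needing care is the uniform integrability that justifies the limit. This is handled by the finiteness of all moments of $X$, which comes from log-concavity, together with the domination $|f(X)\mathbbm 1_{\{\|X\|_2\le R\}}|\le|f(X)|$. Everything else is finite-dimensional continuity. Taking $R$ along a sequence, or any $R\to\infty$ beyond the point where $\Sigma_R\succ0$, gives the required family $Y_R$.
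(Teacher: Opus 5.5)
Your proposal is correct and follows the paper's proof essentially step for step. Both condition on the ball, get moment convergence of $X_R$ from finiteness of all moments plus dominated convergence, and then pass to $Y_R=\Sigma_R^{-1/2}(X_R-\mu_R)$ using coefficientwise convergence. The only cosmetic difference is that you justify finite moments via \eqref{eq:ordinary-moments} on coordinates, while the paper cites Borell directly.
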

\begin{proof}
Let $X_R$ have the law of $X$ conditioned on $\|X\|_2\le R$, with mean
$a_R$ and covariance $\Sigma_R$. Since $X$ has moments of all
orders~\cite{Bor74}, dominated convergence gives
\begin{align*}
 \E f(X_R)
 &=\frac{\E[f(X)\mathbf1_{\{\|X\|_2\le R\}}]}
          {\P(\|X\|_2\le R)}
 \longrightarrow\E f(X)
\end{align*}
for every polynomial $f$. In particular, $a_R\to0$ and $\Sigma_R\to I_d$.
For all sufficiently large $R$, we may therefore set
\[
 Y_R=\Sigma_R^{-1/2}(X_R-a_R).
\]
Conditioning on a convex set and applying an affine map preserve
log-concavity, so $Y_R$ is isotropic, log-concave, and compactly supported.
Finally, $\Sigma_R^{-1/2}\to I_d$, so for each fixed $f$ the coefficients of
$f(\Sigma_R^{-1/2}(x-a_R))$ converge to those of $f(x)$. Together with
the moment convergence of $X_R$, this proves the claim.
\end{proof}

\end{document}